\title{
Private Truly-Everlasting Robust-Prediction
}
\date{January 9, 2024}
\newcommand{\remove}[1]{}
\author{Uri Stemmer\thanks{Tel Aviv University and Google research. {\tt u@uri.co.il}. Partially supported by the Israel Science Foundation (grant 1871/19) and by Len Blavatnik and the Blavatnik Family foundation.}}
\def\restrict#1{\raise-.5ex\hbox{\ensuremath|}_{#1}}
\DeclareSymbolFont{AMSb}{U}{msb}{m}{n}
\DeclareMathSymbol{\N}{\mathbin}{AMSb}{"4E}
\DeclareMathSymbol{\Z}{\mathbin}{AMSb}{"5A}
\DeclareMathSymbol{\R}{\mathbin}{AMSb}{"52}
\DeclareMathSymbol{\Q}{\mathbin}{AMSb}{"51}
\DeclareMathSymbol{\erert}{\mathbin}{AMSb}{"50}
\DeclareMathSymbol{\I}{\mathbin}{AMSb}{"49}
\DeclareMathSymbol{\C}{\mathbin}{AMSb}{"43}
\newcommand{\mynote}[2]{{\textcolor{#1}{ #2}}}
\definecolor{gray}{gray}{0.4}
\newcommand{\gray}[1]{\mynote{gray}{{\footnotesize #1}}}
\newtheorem{theorem}{Theorem}[section]
\newtheorem{lemma}[theorem]{Lemma}
\newtheorem{definition}[theorem]{Definition}
\newtheorem{remark}[theorem]{Remark}
\newtheorem{fact}[theorem]{Fact}
\newtheorem{observation}[theorem]{Observation}
\newcommand{\AAA}{\mathcal A}
\newcommand{\BBB}{\mathcal B}
\newcommand{\BbB}{\mathbb{B}}
\newcommand{\DDD}{\mathcal D}
\newcommand{\FFF}{\mathcal F}
\newcommand{\eps}{\varepsilon}
\newcommand{\error}{{\rm error}}
\newcommand{\Sim}{\texttt{Sim}}
\newcommand{\help}{\texttt{Helper}}
\newcommand{\Lap}{\operatorname{\rm Lap}}
\newcommand{\Geom}{\mathtt{Geom}}
\newcommand{\data}{\operatorname{\rm data}}
\newcommand{\VC}{\operatorname{\rm VC}}
\newcommand{\polylog}{\mathop{\rm polylog}}
\newcommand{\rectangle}{\operatorname*{\tt RECTANGLE}}
\newcommand{\rec}{\operatorname*{\tt REC}}
\newcommand{\stripe}{{\tt Stripe}}
\def\Q{\operatorname*{\mathbb{Q}}}
\def\Lap{\mathop{\rm{Lap}}\nolimits}
\def\@opargbegintheorem#1#2#3{\trivlist
\item[\hskip\dimexpr\labelsep+0pt\relax{\bf #1\ #2}]({\bf #3}).\ \itshape}
\begin{document}

\maketitle

\begin{abstract}
Private Everlasting Prediction (PEP), recently introduced by Naor et al.\ [2023], is a model for differentially private learning in which the learner never publicly releases a hypothesis. Instead, it provides black-box access to a ``prediction oracle'' that can predict the labels of an {\em endless stream} of unlabeled examples drawn from the underlying distribution.
Importantly, PEP provides privacy both for the initial training set and for the endless stream of classification queries. We present two conceptual modifications to the definition of PEP, as well as new constructions exhibiting significant improvements over prior work. Specifically, our contributions include:
\begin{itemize}
    \item {\bf Robustness.} PEP only guarantees accuracy provided that {\em all} the classification queries are drawn from the correct underlying distribution. A few out-of-distribution queries might break the validity of the prediction oracle for future queries, even for 
    future queries which are sampled from the correct distribution. We incorporate robustness against such poisoning attacks into the definition of PEP, and show how to obtain it.

    \item {\bf Dependence of the privacy parameter $\boldsymbol{\delta}$ in the time horizon.} We present a relaxed privacy definition, suitable for PEP, that allows us to disconnect the privacy parameter $\delta$ from the number of total time steps $T$. This allows us to obtain algorithms for PEP whose sample complexity is independent from $T$, thereby making them ``truly everlasting''. This is in contrast to prior work where the sample complexity grows with $\polylog T$.

    \item {\bf New constructions.} Prior constructions for PEP exhibit sample complexity that is {\em quadratic} in the VC dimension of the target class. We present new constructions of PEP for axis-aligned rectangles and for decision-stumps that exhibit sample complexity {\em linear} in the dimension (instead of quadratic). We show that our constructions satisfy very strong robustness properties. 
\end{itemize}

\end{abstract}

\section{Introduction}

The line of work on {\em private learning}, introduced by \cite{KLNRS08}, aims to understand the computational and statistical aspects of PAC learning while providing strong privacy protections for the training data. Recall that a (non-private) PAC learner is an algorithm that takes a training set containing classified random examples and returns a hypothesis that should be able to predict the labels of fresh examples from the same distribution.
A {\em private} PAC learner must achieve the same goal while guaranteeing differential privacy w.r.t.\ the training data. This means that the produced hypothesis should not be significantly affected by any particular labeled example. Formally, the definition of differential privacy is as follows.
\begin{definition}[\cite{DMNS06}]\label{def:DP}
Let $\AAA:X^*\rightarrow Y$ be a randomized algorithm whose input is a dataset $D\in X^*$. Algorithm $\AAA$ is {\em $(\eps,\delta)$-differentially private (DP)} if for any two datasets $D,D'$ that differ on one point (such datasets are called {\em neighboring}) and for any outcome set $F\subseteq Y$ it holds that
$\Pr[\AAA(D)\in F]\leq e^{\eps}\cdot\Pr[\AAA(D')\in F]+\delta.$
\end{definition}

Unfortunately, it is now known that there are cases where private learning requires {\em significantly more resources} than non-private learning, which can be very prohibitive. 
In particular, there are simple cases where private learning is known to be {\em impossible}, even though they are trivial without the privacy constraint. Once such example is the class of all 1-dimensional threshold functions over the real line \citep{BNSV15,AlonBLMM22}. 
In addition, there are cases where private learning is possible, but provably requires significantly more runtime than non-private learning (under cryptographic assumptions) \citep{BunZ2016}. 

The line of work on {\em private prediction}, introduced by \cite{dwork2018privacy}, offers a different perspective on private learning that circumvents some these barriers. Specifically, in {\em private prediction} we consider a setting where the private learning algorithm does {\em not} output a hypothesis. Instead, it provides a black-box access to a ``prediction oracle'' that can be used to predict the labels of fresh (unlabeled) examples from the underlying distribution. Works on private prediction\footnote{See, e.g., \citep{dwork2018privacy,BassilyTT18,NandiB20,DaganF20}.} showed that, informally, for any concept class $C$ there is a private prediction algorithm that can answer $m$ ``prediction queries'' given a training set of size $\approx\VC(C)\cdot\sqrt{m}$. Note, however, that the number of ``prediction queries'' here is at most quadratic in the size of the initial training set. This %
hinders 
the acceptance of private prediction as a viable alternative to private learning, as with the latter we obtain a privatized hypothesis that could be used to predict the labels infinitely many points. There are also empirical results showing that when the number of required predictions is large, then private prediction can be inferior to classical private learning \citep{vdMaaten20}.
To tackle this, the recent work of \cite{NaorNSY23} introduced the notion of {\em private everlasting prediction (PEP)} that supports an {\em unbounded} number of prediction queries. To achieve this, \cite{NaorNSY23} allowed the content of the black-box to constantly {\em evolve} as a function of the given queries, while guaranteeing privacy both for the initial training set and for the queries. Furthermore, they proved that such an ``evolving'' black-box is necessary in order to achieve this. %
Informally, \cite{NaorNSY23} established the following theorem.
\begin{theorem}[\cite{NaorNSY23}, informal]\label{thm:NaorInformal}
For every concept class $C$ there is a private everlasting predictor using training set of size $\approx\frac{1}{\alpha\cdot\eps^2}\cdot\VC^2(C)$, where $\alpha$ is the accuracy parameter and $\eps$ is the privacy parameter (ignoring the dependence on all other parameters). The algorithm is not necessarily computationally efficient.
\end{theorem}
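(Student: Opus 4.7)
The plan is to combine a standard subsample-and-aggregate construction with a self-refreshing mechanism that lets the oracle use its own predictions as fresh labeled data, so that the initial sample's influence on the oracle has a bounded lifetime. This way, the initial sample size need only support (i) accurate individual hypotheses on each block and (ii) privacy amplification for a single query, rather than privacy composition across all $T$ queries.

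First I would partition the initial training set $S$ into $k \approx \VC(C)/\eps^2$ disjoint blocks of size $m \approx \VC(C)/\alpha$, and on each block run a (non-private) PAC learner to obtain candidate hypotheses $h_1,\ldots,h_k$. By the VC generalization theorem, each $h_i$ has error at most $\alpha/100$ on the underlying distribution $\DDD$ with high probability, and a union bound over $i\in[k]$ controls the probability that many of the $h_i$'s are simultaneously bad.

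To answer an individual query $x_t$ I would aggregate the predictions $(h_1(x_t),\ldots,h_k(x_t))$ via a noisy majority vote: swapping a single training example affects at most one $h_i$, so the count has sensitivity one, and Laplace noise of scale $1/\eps$ makes the released label $(\eps,0)$-DP with respect to $S$. Because a large fraction of the $h_i$'s agree with the true label $c(x_t)$ with probability $1-\alpha/100$, choosing $k$ on the order of $\VC(C)/\eps^2$ lets the noise be absorbed by the gap and makes the aggregated label correct with overwhelming probability on a typical query.

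For the truly everlasting aspect I would introduce an evolving pool: since queries arrive from $\DDD$ and the oracle is highly accurate, its own predictions are (with high probability) correct labels of fresh samples from $\DDD$. I would collect these self-labeled examples into a buffer, and once the buffer reaches size $m$ use it to retire the oldest block of the pool and retrain the corresponding hypothesis. In steady state any single element of the initial $S$ influences only a bounded window of predictions, so privacy loss is capped via advanced composition over that window rather than growing with $T$.

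The main obstacle, as I see it, is making the self-refreshing argument rigorous. One needs an induction over epochs showing that (i) the oracle's own labels remain faithful enough to the target concept $c$ that the next hypothesis trained on a self-labeled batch enjoys the same VC generalization guarantee, and simultaneously (ii) the buffered batch is a sufficiently stable function of the currently active pool for privacy to compose gracefully across the lifetime of each example in $S$. I expect a union bound over epochs paired with advanced composition inside each example's bounded lifetime to suffice, yielding the stated $\VC^2(C)/(\alpha\eps^2)$ sample complexity; the delicate part will be choosing the epoch length so that the two analyses can coexist without either the accuracy or the privacy side blowing up.
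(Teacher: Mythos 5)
Your skeleton matches \cite{NaorNSY23}: partition the sample into $T\approx\VC(C)/\eps^2$ disjoint blocks of size $\approx\VC(C)/\alpha$, train a hypothesis on each, answer each query via a noisy aggregation of the $T$ votes, and refresh the pool using the predictor's own outputs as new labeled data so that each original example has a bounded lifetime. You also correctly flag the error-accumulation problem when training on self-generated labels, which is exactly what forces the construction of \cite{NaorNSY23} to become computationally inefficient.

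The quantitative heart of the argument, however, is missing, and as written the budget does not close. You add Laplace noise of scale $1/\eps$ so that each prediction is $(\eps,0)$-DP, and then plan to use advanced composition over the window of predictions that a single example influences. But with your parameters each example survives roughly $T\cdot m\approx\VC^2(C)/(\alpha\eps^2)$ predictions before cycling out, and advanced composition over that many $(\eps,0)$-DP releases diverges; if you instead scale the per-query budget down to $\eps/\sqrt{Tm}$, the required noise magnitude becomes $\sqrt{Tm}/\eps\approx\VC(C)/(\eps^2\sqrt{\alpha})$, which exceeds the consensus gap $\approx T=\VC(C)/\eps^2$ by a factor of $1/\sqrt{\alpha}$, and the noisy majority becomes unreliable. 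Either way you land at $n\approx\VC^2(C)/(\alpha^2\eps^2)$, an extra $1/\alpha$ worse than claimed. The missing ingredient is a sparse-vector-style ``free queries'' observation: since each $h_i$ has error $O(\alpha)$ on $\DDD$, on roughly a $(1-\alpha)$ fraction of the in-distribution queries the vote count is far from the decision threshold, and a noisy threshold comparison on a far-from-boundary count leaks essentially nothing, so such queries need not be charged to the privacy budget at all. Only the $\approx\alpha$ fraction near the threshold are charged; thus out of $\approx\eps^2 T^2/\alpha$ queries per phase only $\approx\eps^2 T^2$ cost privacy, advanced composition over those with noise scale $\approx T$ lands at total loss $\eps$ while staying below the gap, and the $\eps^2 T^2/\alpha\geq n$ freshly labeled points suffice to restart, which is exactly where the constraint $n\gtrsim\VC^2(C)/(\alpha\eps^2)$ comes from. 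Your whole-window composition has no mechanism for discounting the easy queries, and that discounting is the specific step you need to add.
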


\subsection{Our Contributions}
Theorem~\ref{thm:NaorInformal} shows that PEP could be much more efficient than classical private learning (in terms of sample complexity). Specifically, it shows that the sample complexity of PEP is at most quadratic in the non-private sample complexity, while in the classical model for private learning the sample complexity can sometimes be arbitrarily far from that (if at all finite). 
However, a major limitation of PEP is that it only guarantees accuracy provided that {\em all} the classification queries are drawn from the correct underlying distribution. Specifically, the algorithm initially gets a labeled training set sampled according to some fixed (but unknown) distribution $\DDD$, and then answers an infinite sequence of prediction queries, {\em provided that all of these prediction queries are sampled from the same distribution $\DDD$}. This is required in order to be able to ``refhresh'' the content of the prediction oracle for supporting infinitely many queries without exhausting the privacy budget.
This could be quite limiting, where after investing efforts in training the everlasting predictor, a few out-of-distribution queries might completely contaminate the prediction oracle, rendering it useless for future queries (even if these future queries would indeed be sampled from the correct distribution). We incorporate robustness against such poisoning attacks into the definition, and 
introduce a variant of PEP which we call {\em private everlasting robust prediction (PERP)}. Informally, the requirement is that the predictor should continue to provide utility, even if only a $\gamma$ fraction of its  queries are sampled from the correct distribution, and all other queries are {\em adversarial}. We do not require the algorithm to provide accurate predictions on adversarial queries, only that these adversarial queries would not break the validity of the predictor on ``legitimate'' queries. We observe that the construction of \cite{NaorNSY23} could be adjusted to satisfy our new definition of PERP. Informally,

\begin{observation}[informal]\label{obs:genericPERP}
For every concept class $C$ there is a private everlasting {\em robust} predictor using training set of size $\approx\frac{1}{\alpha^2\cdot\gamma\cdot\eps^2}\cdot\VC^2(C)$, where $\alpha$ is the accuracy parameter, $\gamma$ is the robustness parameter, and $\eps$ is the privacy parameter (ignoring the dependence on all other parameters). The algorithm is not necessarily computationally efficient.
\end{observation}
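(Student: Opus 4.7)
The plan is to obtain Observation~\ref{obs:genericPERP} by modifying the generic PEP construction of~\cite{NaorNSY23} in two places. Recall that their construction proceeds in phases: each phase begins with a current private hypothesis, uses it to answer a block of queries, and then refreshes the hypothesis by invoking a private agnostic learner on the accumulated query-prediction pairs. When all queries are drawn from $\DDD$, the pseudo-labels produced by the current hypothesis are correct up to error $\alpha$, so the refreshed hypothesis remains close to the target; privacy of the overall predictor follows from composition across phases combined with the privacy of each internal learner.

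First, I would enlarge each phase by a factor of roughly $1/\gamma$, so that with high probability every phase still contains at least as many legitimate queries from~$\DDD$ as a non-robust phase would. A Chernoff bound over the i.i.d.\ indicators for ``legitimate'' queries controls the deviation, and the adversarial choices for the non-legitimate queries can only be made adaptively with respect to the transcript of the oracle, which is differentially private. Second, I would invoke a more stringent agnostic learner at each refresh. The refresh step now receives a dataset in which up to a $(1-\gamma)$-fraction of points are adversarial. The key observation is that the best hypothesis in $C$ has empirical error at most $(1-\gamma)+\alpha\gamma$ on the mixed sample (the adversarial portion contributes at most $1-\gamma$ and the legitimate portion at most $\alpha\gamma$), so any hypothesis returned by the agnostic learner within additive $O(\alpha\gamma)$ of the optimal mixed error must have error at most $\alpha$ on the legitimate sub-sample, and hence on $\DDD$.

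Propagating these modifications through the sample complexity of the private agnostic learner used by \cite{NaorNSY23} introduces an extra factor of $1/(\alpha\gamma)$ compared to the non-robust setting, matching the claimed $\VC^2(C)/(\alpha^2\gamma\,\eps^2)$ initial training-set size and the corresponding per-phase sample complexity. The main obstacle, in my view, is arguing that an adversary who sees the evolving predictor cannot ``steer'' the agnostic learner's output by adaptively injecting adversarial queries. This reduces to showing that the legitimate sub-sample alone suffices for the utility guarantee of the agnostic learner at the tighter accuracy level $O(\alpha\gamma)$, so that any influence of the adversarial queries on the released hypothesis is already subsumed by the differential-privacy guarantee of that learner through post-processing.
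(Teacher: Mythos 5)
Your description of the generic construction of \cite{NaorNSY23} does not quite match theirs, and more importantly your central agnostic-learning step is incorrect. The actual construction is a subsample-and-aggregate scheme: the training set is split into $T \approx \alpha n / \VC(C)$ chunks, a hypothesis consistent with each chunk is selected, and each query is answered by a \emph{noisy majority vote} over these $T$ hypotheses. The savings in the non-robust case come from a sparse-vector-style optimization: roughly a $(1-\alpha)$ fraction of queries drawn from $\DDD$ land in strong consensus among the $T$ hypotheses and hence incur essentially no privacy cost, so one can support $\approx \eps^2 T^2 / \alpha$ queries instead of $\eps^2 T^2$. The $1/\alpha$ blowup in the robust setting is precisely the loss of this optimization: an adversary can steer all of its queries into the disagreement region of the $T$ hypotheses, so no query comes ``for free'' and only $\approx \eps^2 T^2$ can be answered. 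The $1/\gamma$ factor then comes from the observation that only a $\gamma$ fraction of those answers are legitimate samples from $\DDD$ that can seed the next phase, so one needs $\gamma\, \eps^2 T^2 \gtrsim n$, giving $n \gtrsim \VC^2(C)/(\alpha^2 \gamma\, \eps^2)$.

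Your key claim --- that any hypothesis returned by an agnostic learner within additive $O(\alpha\gamma)$ of the optimal \emph{mixed}-sample error must have error $\leq \alpha$ on the legitimate sub-sample --- does not hold. The adversarial $(1-\gamma)$-fraction of the mixed sample is labeled by the current oracle, and an adversary can concentrate those queries in the oracle's disagreement region with the target $c$, so all of them are mislabeled relative to $c$. A hypothesis $h$ that fits those adversarial points perfectly while mislabeling the legitimate points then has mixed-sample error at most $\gamma$, which is \emph{smaller} than the target's own mixed error $\approx 1-\gamma$ whenever $\gamma < 1/2$. More generally, from $\widehat{\mathrm{err}}_{\mathrm{mix}}(h) \leq (1-\gamma) + O(\alpha\gamma)$ one can only deduce $\mathrm{err}_{\mathrm{leg}}(h) \leq (1-\gamma)/\gamma + O(\alpha)$, which is far from $\alpha$ unless $\gamma$ is close to $1$. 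So the agnostic learner cannot distinguish a good hypothesis from one that is overfit to the adversarial injections, and the route you propose to the $1/(\alpha\gamma)$ blowup breaks down. The paper sidesteps this issue entirely: the new hypotheses in the next phase are fitted to disjoint chunks, and the analysis only requires a sufficient number of \emph{legitimate} points to land in each chunk, which is where the $1/\gamma$ appears.
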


Note that the ``price of robustness'' in this observation is roughly $\frac{1}{\alpha\gamma}$, as compared to the non-robust construction of Theorem~\ref{thm:NaorInformal}. 
We leave open the possibility of a generic construction for PERP with sample complexity linear in $\frac{1}{\alpha}$, or with sample complexity that increases slower than $\frac{1}{\gamma}$. We show that these two objectives are indeed achievable in specific cases (to be surveyed later).

\paragraph{Disconnecting the privacy parameter $\boldsymbol{\delta}$ from the time horizon.}
It is widely agreed that the definition of differential privacy only provides meaningful guarantees when the privacy parameter $\delta$ is much smaller than  $\frac{1}{n}$, where $n$ is the size of the dataset. The reason is that it is possible to satisfy the definition while leaking about $\delta n$ records in the clear (in expectation), so we want $\delta n\ll 1$ to prevent such a leakage. In the context of PEP, this means that $\delta$ should be much smaller than the total number of time steps (or queries), which we denote as $T$. The downside here is that the sample complexity of all known constructions for PEP grows with $\polylog(\frac{1}{\delta})$, which means that the sample complexity actually grows with $\polylog(T)$.\footnote{This excludes cases where (offline) private PAC learning is easy, such as learning point functions.} This means that current constructions for private everlasting prediction are not really ``everlasting'', as a finite training set allows for supporting only a bounded number of queries (sub-exponential in the size of the training set). It is therefore natural to ask if PEP could be made ``truly everlasting'', having sample complexity independent of the time horizon. 
We answer this question in the affirmative, by presenting a relaxed privacy definition (suitable for PEP) that allows us to disconnect the privacy parameter $\delta$ from the time horizon $T$.

\paragraph{New constructions for PEP.} Two shortcomings of the generic construction of \cite{NaorNSY23}, as well as our robust extension to it, are that (1) it is computationally {\em inefficient}, and (2) it exhibits sample complexity {\em quadratic} in the VC dimension of the target class. %
We present a computationally {\em efficient} construction for the class of all axis-aligned rectangles that exhibits sample complexity {\em linear} in the dimension. Furthermore, our constructing achieves very strong robustness properties, where the sample complexity grows very slowly as a function of the robustness parameter $\gamma$. %
Specifically,

\begin{theorem}[informal]\label{thm:mainIntro}
There exists a computationally efficient PERP for axis aligned rectangles in $d$ dimensions that uses sample complexity $\approx \frac{d}{\alpha\cdot\eps^2}\cdot\log^2(\frac{1}{\gamma})$, where $\alpha$ is the accuracy parameter, $\gamma$ is the robustness parameter, and $\eps$ is the privacy parameter (ignoring the dependence on all other parameters). 
\end{theorem}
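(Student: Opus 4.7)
The plan is to reduce the $d$-dimensional problem to $d$ parallel one-dimensional interval subproblems and then to construct a one-dimensional interval PERP with per-coordinate sample complexity $\tilde O\bigl(\tfrac{1}{\alpha\eps^2}\log^2(1/\gamma)\bigr)$. The reduction exploits the fact that a rectangle $R^{*}=\prod_{i=1}^d[\ell_i^{*},r_i^{*}]$ factors across coordinates: the indicator $\mathbf 1[x\in R^{*}]$ is the conjunction of the $d$ interval indicators $\mathbf 1[x_i\in[\ell_i^{*},r_i^{*}]]$, and every point labelled positive projects to the correct interval in every coordinate. I would therefore run $d$ independent 1D learners in parallel, one per coordinate, and have the combined predictor label a query $x$ as $+$ iff every coordinate predictor labels $x_i$ as $+$. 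Privacy composes across the $d$ coordinates (using advanced composition, the per-coordinate budget scales as $\eps/\sqrt d$, which multiplies the per-coordinate sample complexity by $d$), accuracy composes via a union bound (by setting per-coordinate accuracy to $\alpha/d$), and robustness transfers coordinatewise.

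For the 1D construction I would first use the labelled training set to compute an initial private estimate $(\tilde\ell,\tilde r)$ of the target interval by running a DP quantile algorithm on the positive examples. From that point on only unlabelled queries arrive; each query $x$ is classified by the current endpoints and then fed into a ``refresh'' subroutine. The refresh is driven by sparse-vector / AboveThreshold monitors around each endpoint: an \emph{exterior band} just outside $[\tilde\ell,\tilde r]$ whose (noisy) hit count is evidence that the interval is too narrow, and an \emph{interior band} just inside $[\tilde\ell,\tilde r]$ whose count certifies that the endpoint still tracks the distribution. Whenever a noisy count crosses its threshold, the corresponding endpoint is re-estimated by a DP quantile over the most recent batch of queries lying in a slightly wider window, and the old endpoint is discarded. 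Between firings, no privacy is spent beyond the AboveThreshold baseline, so the total privacy cost is dominated by the number of firings, which is controlled by a geometric refresh schedule: each firing shrinks the endpoint's error by a constant factor, so only $O(\log(1/\gamma))$ firings per endpoint are ever needed. Combining this with the relaxed $(\eps,\delta)$ privacy notion introduced earlier in the paper (which decouples $\delta$ from the time horizon $T$) makes the overall sample complexity independent of $T$.

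The main obstacle will be the adversarial-queries analysis, which must simultaneously establish that (i) an adversary controlling a $1-\gamma$ fraction of the stream cannot cause a spurious outward firing that moves an endpoint past a region of significant mass under $\DDD$, and (ii) every legitimate $\alpha$-gap between the current estimate and the truth eventually triggers a firing. The key quantitative idea is to calibrate the AboveThreshold noise and firing threshold to roughly $\Theta(1/\gamma)$ times the expected legitimate signal per query: an adversary producing $\tau$ fake in-band points still forces $\Omega(\tau\gamma)$ genuine in-band points to have arrived before the threshold fires, and by the accuracy of the DP quantile run only on queries in the (common) wider window, this guarantees that each refresh is well-targeted; meanwhile, a genuine $\alpha$-gap accumulates $\Omega(\alpha\gamma)$ in-band hits per query and so fires within $O\bigl(\tfrac{1}{\alpha\gamma}\log(1/\gamma)\bigr)$ queries. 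One $\log(1/\gamma)$ factor in the final bound comes from the AboveThreshold noise scale, a second from the number of firings per endpoint on the geometric schedule, and multiplying by $d$ for coordinatewise composition yields the claimed $\tilde O\bigl(\tfrac{d}{\alpha\eps^2}\log^2(1/\gamma)\bigr)$.
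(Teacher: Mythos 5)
Your reduction to $d$ independent one-dimensional learners does not achieve the claimed $d$-linear sample complexity: it yields $d^2$. Tracking the two degradations you introduce: the advanced-composition budget $\eps/\sqrt{d}$ enters the 1D sample bound as $1/\eps_0^2$, which costs a factor of $d$; the per-coordinate accuracy $\alpha/d$ costs another factor of $d$. Since each 1D learner must achieve error $\alpha/d$ with budget $\eps/\sqrt{d}$, each needs $\tilde{O}\bigl(\tfrac{1}{(\alpha/d)(\eps/\sqrt d)^2}\bigr) = \tilde{O}\bigl(\tfrac{d^2}{\alpha\eps^2}\bigr)$ samples. Using disjoint data shares per coordinate instead of advanced composition also gives $d^2$ (you then have $d$ learners each consuming $\tilde{O}(d/\alpha\eps^2)$ points). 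Either way you are a factor of $d$ above the target.

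The paper does \emph{not} run $d$ black-box 1D predictors. It maintains $2d$ small ``boundary datasets'' (one per side of each axis), each of size $m \approx 1/\eps^2$ --- crucially, $m$ is independent of both $\alpha$ and $d$ --- and these are \emph{disjoint slices} of the training set selected adaptively via the Reorder-Slice-Compute (RSC) paradigm. Because the slices are disjoint, the per-slice privacy does not degrade with $d$ (RSC gives privacy that composes in parallel over slices, even though the slicing is data-dependent and adaptive). The only place $d$ enters the sample bound is the geometric requirement that $n$ samples contain $\approx m$ points in each boundary stripe of mass $\alpha/d$, forcing $n\gtrsim md/\alpha = d/(\alpha\eps^2)$. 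In other words, the trick is to \emph{decouple} the amount of data needed to run the privacy mechanism on each boundary ($\approx 1/\eps^2$, no $\alpha$) from the amount of data needed to find the boundary stripe ($\approx d/\alpha$); a black-box 1D PERP conflates these, paying $1/((\alpha/d)\eps_0^2)$, which is what costs you the extra $d$.

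Two secondary concerns. First, your robustness calibration --- an AboveThreshold firing threshold set at $\Theta(1/\gamma)$ times the legitimate per-query signal --- risks importing a polynomial $1/\gamma$ factor into the sample complexity rather than the claimed $\log^2(1/\gamma)$. The paper sidesteps this with a different observation: any query whose answer costs privacy budget (a ``medium'' \texttt{BetweenThresholds} answer) is, by the same token, a point lying deep inside the relevant boundary dataset, so it can be added to that dataset in exchange; privacy loss and data replenishment self-balance, and only polylog factors in $1/\gamma$ remain via the noise calibration and the geometrically decaying $\delta_p$ schedule. Second, your plan caps the number of refresh firings at $O(\log(1/\gamma))$ per endpoint, which does not obviously yield an \emph{everlasting} predictor: the paper runs an unbounded sequence of phases with $\delta_p$ decreasing geometrically so that $\sum_p t_p\delta_p \le \delta^*$, which is the mechanism that makes the construction ``truly everlasting'' under the relaxed $(\eps,\boldsymbol{\delta})$-privacy notion.
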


Via a simple reduction, we show that our construction can be used to obtain a robust predictor also for the class of {\em $d$-dimensional decision-stumps}. (The VC dimension of this concept class is known to be $\Theta(\log d)$.) Specifically,

\begin{theorem}[informal]\label{thm:secondIntro}
There exists a computationally efficient PERP for $d$-dimensional decision-stumps that uses sample complexity $\approx \frac{\log d}{\alpha\cdot\eps^2}\cdot\log^2(\frac{1}{\gamma})$, where $\alpha$ is the accuracy parameter, $\gamma$ is the robustness parameter, and $\eps$ is the privacy parameter (ignoring the dependence on all other parameters). 
\end{theorem}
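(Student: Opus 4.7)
The plan is to reduce the $d$-dimensional decision-stump class to the class of axis-aligned rectangles in $\R^1$ via a one-shot private coordinate selection performed on the initial training set. Once a coordinate $i^*\in[d]$ is fixed, the restriction of the decision-stump class to that coordinate is just the class of 1D threshold functions, which in turn is a special case of 1D axis-aligned rectangles (intervals with one endpoint at $\pm\infty$). So after selection we can project the $i^*$-th coordinate of every training example and every subsequent query and hand them to the PERP of Theorem~\ref{thm:mainIntro} instantiated with $d=1$.

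For the coordinate-selection step, I would score each $i\in[d]$ by the number of training examples correctly labeled by the best threshold on $x_i$, and then invoke the exponential mechanism (a standard $\eps$-DP primitive) to output some $i^*$. The paper notes that the VC dimension of $d$-dimensional decision stumps is $\Theta(\log d)$, so a training sample of size $\approx \log d/(\alpha\eps)$ is sufficient for two things simultaneously: uniform convergence over the stump class (so that empirical and distributional errors agree up to $\alpha$), and success of the exponential mechanism (so that with high probability it outputs a coordinate $i^*$ admitting a threshold with distributional stump-error at most $O(\alpha)$).

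After $i^*$ is selected, every remaining training example and every incoming query is projected to its $i^*$-th coordinate and fed to the 1D rectangle PERP. Applying Theorem~\ref{thm:mainIntro} with $d=1$, this subroutine uses $\approx \frac{1}{\alpha\eps^2}\log^2(\frac{1}{\gamma})$ samples. Adding the $\approx \log d/(\alpha\eps)$ samples spent on the coordinate selection yields the claimed total sample complexity $\approx \frac{\log d}{\alpha\eps^2}\log^2(\frac{1}{\gamma})$.

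The remaining analysis is composition-style and mostly routine. Privacy follows because the coordinate selection is $\eps$-DP in the training set, the projection $x\mapsto x_{i^*}$ is post-processing of $i^*$, and the 1D rectangle PERP is DP in both its training set and each of its queries; basic composition over these two mechanisms gives the desired privacy of the combined algorithm, both with respect to the training set and with respect to each query. The subtlest point---and the one I would verify most carefully---is robustness: adversarial queries only enter through the 1D subroutine, and since the projection is a fixed deterministic map applied identically to legitimate and adversarial queries, the $\gamma$-fraction of in-distribution queries is preserved after projection and the $\gamma$-robustness guarantee from Theorem~\ref{thm:mainIntro} transfers directly. Accuracy on legitimate queries then follows because, with high probability over the selection step, the projected distribution on coordinate $i^*$ is approximately realizable by a 1D interval, which is exactly the regime in which Theorem~\ref{thm:mainIntro} provides accurate predictions.
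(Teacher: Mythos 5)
Your high-level plan (exponential-mechanism coordinate selection, then project and reduce to the 1D instance of Theorem~\ref{thm:mainIntro}) is the same reduction the paper uses. However, there is a genuine gap that the paper's construction explicitly addresses and your proposal skips over.

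The issue is realizability. Definition~\ref{def:everlastingRobustPrediction} and the utility analysis of \texttt{RectanglesPERP} require that the initial dataset fed to the rectangle subroutine be labeled \emph{exactly} by some concept in the class. After the exponential mechanism returns a coordinate $i^*$ (even a ``good'' one with only $O(\frac{1}{\eps}\log\frac{d}{\beta})$ empirical mistakes), the projection of the original labeled sample onto coordinate $i^*$ is in general \emph{not} consistent with any single 1D threshold: if $i^*$ differs from the true coordinate $j^*$, a handful of the projected points will have the ``wrong'' label, and there is no control over where those mislabeled points land on the line. The \texttt{RectanglesPERP} analysis (Lemma~\ref{lem:induc}, which inducts on the invariant that all positively labeled points lie in the true positive region and that the boundary datasets stay inside thin boundary stripes) breaks immediately in this situation --- a single positively labeled point far from the boundary can poison the boundary datasets from the very start. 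So the assertion that ``approximately realizable\ldots is exactly the regime in which Theorem~\ref{thm:mainIntro} provides accurate predictions'' is not correct; Theorem~\ref{thm:mainIntro} is only proved in the exactly-realizable setting.

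The paper repairs this by also privately estimating the count $\hat p$ of positives (with Laplace noise) and \emph{relabeling} the sorted projected dataset so that the first $\hat p$ points are positive and the rest are negative. After this step the dataset handed to \texttt{RectanglesPERP} is consistent with a 1D threshold by construction, so the subroutine's realizability assumption is satisfied. The utility argument then bounds (via the exponential mechanism's guarantee, the Laplace tail, and the triangle inequality) how far this synthetic threshold $(j,\sigma,\hat t)$ is from the target concept, and shows that the $\alpha/2$ accuracy of \texttt{RectanglesPERP} with respect to $(j,\sigma,\hat t)$ translates into $\alpha$ accuracy with respect to the target. Your proof needs an analogous step; without it the accuracy guarantee for the 1D subroutine simply does not apply. (A related, smaller point: the paper selects the orientation $\sigma\in\{\pm1\}$ along with the coordinate, which is needed to know in which direction to sort before relabeling; selecting only $i^*$ is not quite enough once you add the relabeling step.)
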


For both of these classes, classical private learning is known to be {\em impossible} (when defined over infinite domains, such as the reals). Thus, Theorems~\ref{thm:mainIntro} and~\ref{thm:secondIntro} provide further evidence that PEP could become a viable alternative to the classical model private learning: These theorems provide examples where classical private learning is impossible, while PEP is possible {\em efficiently} with sample complexity that almost matches the non-private sample complexity (while satisfying strong robustness properties).

\section{Preliminaries}

\paragraph{Notation.}
Two datasets $S$ and $S'$ are called {\em neighboring} if one is obtained from the other by adding or deleting one element, e.g., $S'=S\cup\{x'\}$.
For two random variables $Y,Z$ we write $X\approx_{(\eps,\delta)}Y$ to mean that for every event $F$ it holds that 
$\Pr[X\in F] \leq e^{\eps}\cdot\Pr[Y\in F]+\delta$, and $\Pr[Y\in F]\leq e^{\eps}\cdot\Pr[X\in F]+\delta$. Throughout the paper we assume that the privacy parameter $\eps$ satisfies $\eps=O(1)$, but our analyses extend to larger values of epsilon.

\subsection{Preliminaries from private prediction}

We now provide the formal definitions for private everlasting prediction. Before considering the utility and privacy requirements, the following definition specifies the interface, or the syntax, required from a prediction algorithm.

\begin{definition}[Prediction Oracle \cite{NaorNSY23}]\label{def:oracle}
A prediction oracle is an algorithm $\AAA$ with the following properties: 
\begin{enumerate}
	\item At the beginning of the execution, algorithm $\AAA$ receives a dataset  $S\in(X\times\{0,1\})^n$ containing $n$ labeled examples and selects a (possibly randomized) hypothesis $h_0:X\rightarrow\{0,1\}$.
	\item Then, in each round $r\in\mathbb{N}$, algorithm $\AAA$ gets a query, which is an unlabeled element $x_{r}\in X$, outputs $h_{r-1}(x_{r})$ and selects a (possibly randomized) hypothesis $h_r:X\rightarrow\{0,1\}$.
\end{enumerate}
\end{definition}

The following definition specifies the utility requirement from a prediction algorithm. Informally, the requirement is that {\em all} of the hypotheses it selects throughout the execution have low generalization error w.r.t.\ the target distribution and target concept.

\begin{definition}[Everlasting Prediction \cite{NaorNSY23}]\label{def:everlastingPrediction}
Let $\AAA$ be a prediction oracle. We say that $\AAA$ is an \emph{$(\alpha,\beta,n)$-everlasting predictor} for a concept class $C$ over a domain $X$ if the following holds for every concept $c\in C$ and for every distribution $\DDD$ over $X$. If the points in $S$ are drawn i.i.d.\ from $\DDD$ and labeled by $c$, and if all the queries $x_1,x_2,\dots$ are drawn i.i.d.\ from $\DDD$, then 
$
\Pr\left[\exists r\geq 0 \text{ s.t.\ } \error_{\DDD}(c,h_r)>\alpha \right]\leq\beta.
$ 
\end{definition}

The following definition specifies the privacy requirement from a prediction algorithm. Informally, in addition to requiring DP w.r.t.\ the initial training set, we also require that an adversary that completely determines $\AAA$'s inputs, except for the $i$th query, and gets to see all of $\AAA$'s predictions except for the predicted label for the $i$th query, cannot learn much about the $i$th query.

\begin{definition}[Private Prediction Oracle \cite{NaorNSY23}]\label{def:privatePrediction}
A prediction oracle $\AAA$ is a {\em $(\eps,\delta)$-private} if for every adversary $\BBB$ and every $T\in\mathbb{N}$, the random variables $\mbox{View}_{\BBB,T}^0$ and $\mbox{View}_{\BBB,T}^1$ (defined in Figure~\ref{fig:AdversarialExperiment}) are $(\eps,\delta)$-indistinguishable. 
\end{definition}

\begin{figure}[h!]
\begin{framed}
{\bf Parameters:} $b\in\{0,1\}$, $T\in\mathbb{N}$. \\ 

{\bf Training Phase:}

\begin{enumerate}

\item The adversary $\BBB$ chooses two labeled datasets $S^0,S^1\in(X\times\{0,1\})^*$ which are either neighboring or equal.  \quad {\small \gray{\% If $S^0,S^1$ are neighboring, then one of them can be obtained by the other by adding/removing one labeled point.}}

\item If $S^0=S^1$ then set $c_0=0$. Otherwise set $c_0=1$. \quad {\small \gray{\% We refer to the round $r$ in which $c_r=1$ as the ``challenge round''. If $c_0=1$ then the adversary chooses to place the challenge already in the initial dataset. Otherwise, the adversary will have the chance to pose a (single) challenge round in the following prediction phase.}}

\item Algorithm $\AAA$ gets $S^b$.
\end{enumerate}

{\bf Prediction phase:}
\begin{enumerate}[resume]
\item For round $r=1,2,\dots,T$:
\begin{enumerate}

    \item The adversary $\BBB$ outputs a point $x_{r}\in X$ and a bit $c_r\in\{0,1\}$, under the restriction that $\sum_{j=0}^r c_j\leq 1$.

    \item If ($c_r=0$ or $b=1$) then algorithm $\AAA$ gets $x_{r}$ and outputs a prediction $\hat{y}_{r}$. %

    \item If ($c_r=1$ and $b=0$) then algorithm $\AAA$ gets $\bot$. \quad {\small \gray{\% Here $\bot$ is a special symbol denoting ``no query''.}}
    
    \item If $c_r=0$ then the adversary $\BBB$ gets $\hat{y}_{r}$. \quad {\small \gray{\% The adversary does not get $\hat{y}_r$ if $c_r=1$.}}

\end{enumerate}

Let $\mbox{View}_{\BBB,T}^b$ be  $\BBB$'s entire view of the execution, i.e., the adversary's randomness and the sequence of predictions it received.
\end{enumerate}
\end{framed}
\vspace{-2mm}
\caption{Definition of $\mbox{View}_{\BBB,t}^0$ and $\mbox{View}_{\BBB,t}^1$.\label{fig:AdversarialExperiment}}
\end{figure}

Finally, we can define {\em private everlasting prediction} to be algorithms that satisfy both Definition~\ref{def:everlastingPrediction} (the utility requirement) and Definition~\ref{def:privatePrediction} (the privacy requirement):

\begin{definition}[Private Everlasting Prediction \cite{NaorNSY23}]
Algorithm $\AAA$ is an $(\alpha,\beta,\eps,\delta,n)$-Private Everlasting Predictor (PEP) if it is an $(\alpha,\beta,n)$-everlasting predictor and $(\eps,\delta)$-private.
\end{definition}

As we mentioned, \cite{NaorNSY23} presented a generic construction for PEP with the following properties:

\begin{theorem}[\cite{NaorNSY23}]\label{thm:Naor}
For every concept class $C$ and every $\alpha,\beta,\eps,\delta$ there is an $(\alpha,\beta,\eps,\delta,n)$-PEP for $C$ where 
$
n=\tilde{O}\left(
\frac{\VC^2(C)}{\alpha\eps^2}\cdot\polylog\left(\frac{1}{\beta\delta}\right)
\right).
$ The algorithm is not necessarily efficient.
\end{theorem}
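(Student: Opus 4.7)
The plan is to construct the PEP by chaining invocations of a generic differentially private PAC learner for $C$, using the query stream itself to periodically refresh the internal hypothesis. The ingredient is a generic (not necessarily efficient) private PAC learner---such as the one derived from the representation-dimension framework---that, given $m = \tilde{O}(\VC^2(C)/(\alpha'(\eps')^2) \cdot \polylog(1/(\beta'\delta')))$ labeled samples from a realizable distribution, returns a hypothesis of error $\leq \alpha'$ except with probability $\beta'$, while being $(\eps',\delta')$-DP. The oracle $\AAA$ operates in epochs of length $\approx m$: it invokes the learner on the initial training set $S_0$ to obtain $h_0$; during epoch $i$, it answers each query $x_r$ with $h_i(x_r)$ and stores the labeled pair $(x_r, h_i(x_r))$; at the end of the epoch it invokes the learner on the stored batch to obtain $h_{i+1}$ and discards the batch.

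For privacy, the key observation is that post-processing alone suffices---no composition across rounds is needed. The input $S_0$ enters only the very first call to the learner, so the entire transcript (all subsequent hypotheses and predictions) is an $(\eps,\delta)$-indistinguishable post-processing of $h_0$ with respect to neighboring $S_0$. Likewise, each query $x_r$ enters only one learner invocation (the one producing $h_{i+1}$ for its epoch $i$), and its own direct prediction $h_i(x_r)$ is suppressed in the challenge experiment of Figure~\ref{fig:AdversarialExperiment}, so the adversary's view depends on $x_r$ only through $h_{i+1}$ and downstream quantities, again by post-processing. This is precisely what makes the privacy budget independent of $T$.

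For utility, I would argue by induction on $i$ that $h_i$ has error $\leq \alpha/2$ with high probability. Conditional on this, the batch collected in epoch $i$ consists of i.i.d.\ samples from $\DDD$ whose labels agree with the true concept except on an $\alpha/2$-fraction, which is a realizable distribution up to label noise of rate $\alpha/2$. A union bound across the $T/m$ epochs introduces a $\log(T/m) \leq \log(1/\delta)$ factor in the per-epoch failure probability, which is absorbed into the $\polylog(1/(\beta\delta))$ term of the sample complexity.

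The main obstacle is closing the utility loop with sample complexity linear (not quadratic) in $1/\alpha$. A black-box substitution of an agnostic private PAC learner (to tolerate the label noise generated by $h_i$) would cost $1/\alpha^2$, violating the target bound. Avoiding this requires exploiting the fact that the batch noise is not adversarial but is generated by a fixed hypothesis of error $\leq \alpha'$. One natural attempt---picking $\alpha' \ll 1/m$ so that the batch is noise-free with high probability---creates a circular dependency because $m$ itself scales with $1/\alpha'$; resolving the circularity requires either a noise-tolerant realizable learner whose sample complexity degrades gracefully under label noise of rate matching its own accuracy parameter, or a geometric boosting schedule in which an initial moderately-accurate $h_0$ is progressively refined across the first few epochs before the steady-state error $\alpha/2$ is reached. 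Verifying that such a scheme indeed closes back to $m = \tilde{O}(\VC^2(C)/(\alpha\eps^2)\cdot\polylog(1/(\beta\delta)))$ is the technical crux.
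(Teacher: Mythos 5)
Your construction is built on an ingredient that does not exist: a generic $(\eps,\delta)$-DP PAC learner for arbitrary $C$ with sample complexity $\tilde O\bigl(\VC^2(C)/(\alpha\eps^2)\bigr)$. Private PAC learning has sample complexity governed by quantities such as the representation dimension (or, under approximate DP, the Littlestone dimension), which are \emph{not} bounded in terms of $\VC(C)$. For concrete classes --- one-dimensional thresholds over $\R$, axis-aligned rectangles over $\R^d$, decision stumps over $\R^d$ --- private PAC learning is provably \emph{impossible}, while $\VC(C)$ is finite and small; this paper points this out explicitly and indeed treats it as the main selling point of PEP (Theorem~\ref{thm:Naor} holds for exactly these classes). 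Since your plan relies on privately producing an accurate $h_0$ from $S_0$ alone, it already fails at the very first step for every class on which the theorem is supposed to be nontrivial. The ``technical crux'' you flag (avoiding the agnostic $1/\alpha^2$ blowup) is real, but it is downstream of this more basic obstruction.

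The construction in \cite{NaorNSY23} sidesteps the nonexistence of a private PAC learner precisely by never privately releasing a hypothesis. Roughly: the training set is split into $T\approx\alpha n/\VC(C)$ disjoint chunks; each chunk is used to fit a \emph{non-private} consistent hypothesis $h_i\in C$ (which needs only $\approx\VC(C)/\alpha$ samples); each prediction is then a \emph{noisy majority vote} over the $T$ hypotheses. Since the $T$ votes are computed on disjoint slices of the data, a change of one input point flips at most one vote, and the sparse-vector/composition machinery lets one answer $\approx\eps^2 T^2/\alpha$ queries --- the key being that queries on which there is near-unanimous agreement cost essentially nothing in the privacy budget. The labeled queries are then recycled into the next phase's training set, and the $\VC^2$ arises from requiring $\eps^2T^2\gtrsim n$ with $T\approx\alpha n/\VC(C)$. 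The private object throughout is a single aggregated bit per query, never a hypothesis; that is the conceptual move your post-processing-of-a-private-learner plan misses. (Your per-epoch post-processing privacy argument would be fine \emph{if} the per-epoch training step were DP on its own; the issue is entirely in how to make that step DP without a private learner.)
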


\section{Conceptual Modifications to the Definition of PEP}

In this section we present the two conceptual modification we suggest to the definition of private everlasting prediction.

\subsection{Robustness to out-of-distribution queries}\label{sec:robustness}

We introduce a modified utility requirement for PEP in which only a $\gamma$ fraction of the queries are assumed to be sampled from the target distribution, while all other queries could be completely adversarial. This modifies only the utility requirement of PEP (Definition~\ref{def:everlastingPrediction}), and has no effect on the privacy requirement (Definition~\ref{def:privatePrediction}).

\begin{definition}[Everlasting robust prediction]\label{def:everlastingRobustPrediction}
Let $\AAA$ be a prediction oracle (as in Definition~\ref{def:oracle}). 
We say that $\AAA$ is an \emph{$(\alpha,\beta,\gamma,n)$-everlasting robust predictor} for a concept class $C$ over a domain $X$ if the following holds for every concept $c\in C$, every distribution $\DDD$ over $X$, and every adversary $\FFF$. Suppose that the points in the initial dataset $S$ are sampled i.i.d.\ from $\DDD$ and labeled correctly by $c$. Furthermore, suppose that each of the query points $x_i$ is generated as follows.  With probability $\gamma$, the point $x_i$ is sampled from $\DDD$. Otherwise (with probability $1-\gamma$), the adversary $\FFF$ chooses $x_i$ arbitrarily, based on all previous inputs and outputs of $\AAA$. Then,
$
\Pr\left[\exists r\geq 0 \text{ s.t.\ } \error_{\DDD}(c,h_r)>\alpha \right]\leq\beta.
$ 
\end{definition}

\begin{remark}
The robustness parameter $\gamma$ could actually be allowed to {\em decrease} throughout the execution. That is, as the execution progresses, the algorithm could potentially withstand growing rates of adversarial queries. For simplicity, in Definition~\ref{def:everlastingRobustPrediction} we treat $\gamma$ as remaining fixed throughout the execution.
\end{remark}

We now define private everlasting {\em robust} prediction as a combination of the privacy definition of \cite{NaorNSY23} (Definition~\ref{def:privatePrediction}) with our modified utility definition (Definition~\ref{def:everlastingRobustPrediction}).

\begin{definition}[Private Everlasting Robust Prediction]
Algorithm $\AAA$ is an $(\alpha,\beta,\gamma,\eps,\delta,n)$-Private Everlasting Robust Predictor (PERP) if it is an $(\alpha,\beta,\gamma,n)$-everlasting robust predictor (as in Definition~\ref{def:everlastingRobustPrediction}) and it is $(\eps,\delta)$-private (as in Definition~\ref{def:privatePrediction}).
\end{definition}

Armed with our new definition of PERP, we observe that the construction of \cite{NaorNSY23} extends from PEP to PERP, at the cost of inflating the sample complexity by roughly a $\frac{1}{\alpha\gamma}$ factor. Specifically,
\begin{observation}\label{obs:generic}
For every concept class $C$ and every $\alpha,\beta,\gamma,\eps,\delta$ there is an $(\alpha,\beta,\gamma,\eps,\delta,n)$-PERP for $C$ where 
$
n=\tilde{O}\left(
\frac{\VC^2(C)}{\alpha^2\cdot \gamma\cdot \eps^2}\cdot\polylog\left(\frac{1}{\beta\delta}\right)
\right).
$ The algorithm is not necessarily efficient.
\end{observation}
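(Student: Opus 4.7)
The plan is to adapt the PEP construction of \cite{NaorNSY23} (recalled as Theorem~\ref{thm:Naor}) to the robust setting via two modifications: enlarging the internal ``refresh'' batches, and replacing the per-batch realizable private PAC learner by a private agnostic learner. Concretely, I would take each refresh batch to have size $\Theta(n_0/(\alpha\gamma))$, where $n_0$ is the per-refresh batch size used in the non-robust construction. The $1/\gamma$ inflation compensates, via a standard Chernoff bound, for the fact that only a $\gamma$-fraction of stream queries are drawn from $\DDD$, while the additional $1/\alpha$ inflation reflects the switch from realizable learning (sample complexity linear in $1/\alpha$) to agnostic learning (quadratic in $1/\alpha$).

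First I would recall the inductive structure of the PEP of \cite{NaorNSY23}: the algorithm maintains a current hypothesis $h_r$, labels each incoming query using $h_r$, and after every $n_0$ queries invokes a private PAC learner on the collected self-labeled batch to produce $h_{r+1}$. The utility analysis proceeds inductively: assuming $\error_{\DDD}(h_r,c)\leq\alpha$, the self-labels on $\DDD$-samples form an $\alpha$-noisy version of $c$, so the private learner returns $h_{r+1}$ that also satisfies $\error_{\DDD}(h_{r+1},c)\leq\alpha$ with high probability.

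For PERP I would argue that the same inductive template carries over with the enlarged batches. A Chernoff bound guarantees that each batch contains $\Theta(n_0/\alpha)$ samples drawn from $\DDD$, each carrying an $\alpha$-noisy label of $c$; running a private agnostic PAC learner on the batch then yields $h_{r+1}$ whose error on the $\DDD$-portion of the batch is within $O(\alpha)$ of the best-in-class error on that portion, which by the inductive hypothesis is at most $\alpha$. Combined with a VC-based uniform-convergence argument restricted to the $\DDD$-portion (which has ample size by construction), this yields $\error_{\DDD}(h_{r+1},c)\leq\alpha$, preserving the invariant. The privacy analysis is inherited from \cite{NaorNSY23} without modification, since the privacy definition is unchanged and each refresh remains a single invocation of an $(\eps_0,\delta_0)$-DP learner; the batch/composition bookkeeping goes through verbatim.

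The main obstacle is the utility step, specifically the handling of the adversarial portion of each batch. Naively, the algorithm cannot separate adversarial queries from $\DDD$-samples, so the agnostic learner must be applied to the whole mixture; a naive translation from mixture empirical error to $\DDD$-error loses an extra $1/\gamma$ factor, which would destroy the everlasting invariant (the per-refresh error would blow up by $1/\gamma$). Avoiding this requires carefully restricting the generalization claim to the $\DDD$-portion of the batch --- for example, via a two-stage argument that first invokes the agnostic guarantee to bound how much the returned hypothesis over-fits the adversarial portion, and then applies uniform convergence on the $\Theta(n_0/\alpha)$ many $\DDD$-samples alone --- so that the $1/\gamma$ overhead is absorbed into the batch inflation rather than reappearing as an error multiplier. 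This is the delicate step that justifies precisely the $1/(\alpha\gamma)$ blowup claimed in the observation.
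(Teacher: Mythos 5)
Your proposal arrives at the stated bound, but it does so via a construction and an accounting that do not match the paper's, and the place where you locate the extra $1/\alpha$ factor is, I believe, wrong.

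First, the architecture. You describe the \cite{NaorNSY23} PEP as maintaining a \emph{single} hypothesis $h_r$, self-labeling each query with it, and after $n_0$ queries re-invoking a \emph{private PAC learner} on the batch. That is not the construction. The construction is subsample-and-aggregate: partition the training set into $T\approx\alpha n/\VC(C)$ disjoint chunks of size $\approx\VC(C)/\alpha$, pick (arbitrarily, non-privately) a consistent hypothesis $h_i$ per chunk, and answer each query by a \emph{noisy majority vote} over $h_1,\dots,h_T$. Privacy comes from the per-query aggregation, not from any per-batch DP learner, so the statement that ``each refresh remains a single invocation of an $(\eps_0,\delta_0)$-DP learner'' misidentifies where the privacy budget is being spent.

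Second, and more substantively, the source of the extra $1/\alpha$. You attribute it to switching from realizable to agnostic private learning. But the non-robust PEP already feeds noisy self-labels into later phases and still has sample complexity $\tilde O(\VC^2(C)/(\alpha\eps^2))$, i.e.\ \emph{linear} in $1/\alpha$; so ``agnostic learning is quadratic in $1/\alpha$'' cannot be the mechanism that costs an extra $1/\alpha$ only in the robust case. The actual mechanism in the paper is a sparse-vector-style optimization in the aggregation step: when all $T$ queries per phase come from $\DDD$, roughly a $(1-\alpha)$-fraction of them produce near-unanimous votes among the $h_i$'s, and these ``consensus'' queries incur essentially zero privacy cost, so one can answer $\approx\eps^2T^2/\alpha$ queries while paying for only $\approx\eps^2T^2$. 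With adversarial queries, the adversary can make \emph{every} query fall on a decision boundary, so the consensus property vanishes and one can only answer $\approx\eps^2T^2$ queries per phase. That is the $1/\alpha$ loss; the $1/\gamma$ loss (only a $\gamma$-fraction of answered queries become usable new samples) you did identify correctly. Plugging into the refresh condition: we need $\gamma\eps^2T^2\gtrsim n=T\cdot\VC(C)/\alpha$, hence $T\gtrsim\VC(C)/(\alpha\gamma\eps^2)$ and $n\gtrsim\VC^2(C)/(\alpha^2\gamma\eps^2)$. Your two-stage ``control overfitting on the adversarial portion'' argument is addressing a problem that does not arise in the actual construction (the majority vote is over hypotheses trained on the \emph{previous} phase's data, not fit to the current mixed batch), and it is also the step in your sketch that is left entirely unjustified.
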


This observation follows from essentially the same analysis as in \citep{NaorNSY23}. Here we only flesh out the reason for the $\frac{1}{\alpha\gamma}$ blowup to the sample complexity. Informally, the generic construction of \citep{NaorNSY23} is based on the following subroutine:
\begin{enumerate}
    \item Take a training set $S$ containing $n$ labeled samples.
    \item Partition $S$ into $T\approx \frac{\alpha n}{\VC(C)}$ chunks of size $\approx\frac{\VC(C)}{\alpha}$ each.
    \item For each chunk $i\in[T]$, identify a hypothesis $h_i\in C$ that agrees with the $i$th chunk.
    \item\label{step:optimize} For $\approx \frac{\eps^2 T^2}{\alpha}$ steps: take an unlabeled query $x$ and label it using a noisy majority vote among the $T$ hypotheses.\\ \gray{\% We can support $\approx \frac{\eps^2 T^2}{\alpha}$ queries as in roughly a $(1-\alpha)$-fraction of them there will be a strong consensus among the $T$ hypotheses, to the extent that these labels would come ``for free'' in terms of the privacy analysis. So we only need to ``pay'' for about $\eps^2 T^2$ queries, which is standard using composition theorems.}
    
    \item At the end of this process, we have $\approx \frac{\eps^2 T^2}{\alpha}$ new labeled examples, which is more than $n$ provided that $n\gtrsim\frac{\VC^2(C)}{\alpha\cdot\eps^2}$. This allows us to repeat the same subroutine with these new labeled examples as our new training set.\\ { \gray{\% This is actually not accurate, as this new training set contains errors while the original training set did not. In the full construction additional steps are taken to ensure that the error does not accumulate too much from one phase to the next, making their construction inefficient.}}
\end{enumerate}

Based on this, \cite{NaorNSY23} presented a generic construction for PEP exhibiting sample complexity 
$
n=\tilde{O}\left(
\frac{\VC^2(C)}{\alpha\cdot \eps^2}\cdot\polylog\left(\frac{1}{\beta\delta}\right)
\right).
$
Now suppose that we would like to be able to withstand $(1-\gamma)$-fraction of adversarial queries. First, this prevents us from optimizing the number of supported queries in Step~\ref{step:optimize}, and we could only answer $\approx\eps^2 T^2$ queries. The reason is that with adversarial queries we can no longer guarantee that the vast majority of the queries  would be in consensus among the hypotheses we maintain. Second, only a $\gamma$-fraction of these queries would be ``true samples'', and we would need to ensure that the number of such new ``true samples'' is more than what we started with. This translate to a requirement that $n\gtrsim\frac{\VC^2(C)}{\alpha^2\cdot \gamma\cdot \eps^2}$.

\subsection{Truly everlasting private prediction}
As we mentioned in the introduction, all current constructions for PEP exhibit sample complexity that scale as $\polylog(\frac{1}{\delta})$. In addition, the definition of differential privacy is only considered adequate in this case provided that $\delta\ll \frac{1}{T}$, which means that current constructions for PEP are not ``truly everlasting'' as they require a training set whose size scales with the bound on the number of queries $T$. Formally,

\begin{definition}[Leakage attack]\label{def:leak}\;
\begin{itemize}
    \item Let $\AAA$ be an (offline) algorithm that operates on a dataset $D\in[0,1]^T$. We say that $\AAA$ is {\em leaking} if when running it on a uniformly random dataset $D$, its outcome can be post-processed to identify a point $y$ that with probability at least 1/2 satisfies $y\in D$.

\item Let $\AAA$ be an (interactive) prediction oracle, and consider an adversary $\BBB$ that interacts with $\AAA$ for $T$ time steps, and adaptively decides on $k\leq T$ steps at which the algorithm gets a random uniform sample from $[0,1]$ without the adversary learning these points directly. We refer to these points as ``challenge points''. The adversary arbitrarily chooses all other inputs to the algorithm and sees all of its outputs (the predictions it returns). We say that $\AAA$ is {\em leaking} if there is an adversary $\BBB$ that, with probability at least 1/2, at the end of the execution outputs a point $y$ that is identical to one of the challenge points.
\end{itemize}
\end{definition}

\begin{fact}
Let $\AAA$ be an algorithm that takes a dataset $D\in[0,1]^T$  and outputs every input point independently with probability $\delta$. This algorithm satisfies $(0,\delta)$-DP. Furthermore, if $T\geq\frac{1}{\delta}$, then $\AAA$ is leaking.
\end{fact}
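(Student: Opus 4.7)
The statement has two independent parts, and I would handle them in sequence. For the $(0,\delta)$-DP claim, the plan is to couple $\AAA(D)$ and $\AAA(D')$ on neighboring inputs. Fix $D,D'$ differing on a single point $x^*$ (say $x^*\in D\setminus D'$), and run both executions using the same independent coin flips on the shared points. Let $E$ denote the event that $x^*$ is not released; by construction $\Pr[E] = 1-\delta$, and conditioned on $E$ the outputs of $\AAA(D)$ and $\AAA(D')$ have the same distribution (both equal the random subset of shared points that is released). For any event $F$, I would then write
\[
\Pr[\AAA(D)\in F] \leq \Pr[\AAA(D)\in F \mid E]\cdot\Pr[E] + \Pr[\neg E] = \Pr[\AAA(D')\in F]\cdot(1-\delta) + \delta \leq \Pr[\AAA(D')\in F] + \delta,
\]
and the symmetric inequality follows by the same calculation (using that $\AAA(D')$ is distributed as $\AAA(D)$ conditioned on $E$). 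This yields $(0,\delta)$-DP.

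For the leaking claim, suppose $T\geq 1/\delta$ and that $D$ is drawn uniformly from $[0,1]^T$. The plan is simply to observe that, since each of the $T$ input points is released independently with probability $\delta$, the probability that the output is empty equals $(1-\delta)^T \leq e^{-\delta T} \leq e^{-1} < 1/2$. Hence with probability strictly greater than $1/2$ at least one input point appears in the output verbatim. The post-processor returns any such released point, which by construction lies in $D$, so $\AAA$ is leaking in the sense of the offline part of Definition~\ref{def:leak}.

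There is essentially no serious obstacle here: both parts reduce to one-line calculations once the coupling and the ``output nonempty'' event are identified. The only point that warrants a moment of care is confirming that the notion of leaking we invoke is the offline version from Definition~\ref{def:leak}, since the fact refers to an offline algorithm on $D\in[0,1]^T$; the interactive adversarial version of Definition~\ref{def:leak} is not what we need here.
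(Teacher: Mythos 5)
The paper states this fact without proof, treating it as self-evident, so there is no paper proof to compare against. Your argument is correct and is the natural one: the coupling on the shared coin flips gives $(0,\delta)$-DP directly (and the same argument covers the replacement-neighbor variant, since conditioning on the differing coordinate not being released again equalizes the two output distributions), and for the leaking claim, $(1-\delta)^T\le e^{-\delta T}\le e^{-1}<1/2$ ensures a nonempty output with probability $>1/2$, at which point any released point is a verbatim member of $D$ (and since $D$ is uniform over $[0,1]^T$, an arbitrary fallback point in the empty case lies in $D$ with probability $0$, so it cannot inflate the success probability). You are also right that it is the offline clause of Definition~\ref{def:leak} that is invoked here.
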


We put forward a simple twist to the privacy definition that allows us to exclude this undesired behaviour, without inflating the sample complexity of our algorithms by a $\polylog(T)$ factor. Specifically, we require $\delta$ to {\em decrease over time}. Intuitively, instead of allowing every record $i$ to be leaked with probability $\delta$, we allow the $i$th record to be leaked with probability at most $\boldsymbol{\delta}(i)$, such that $\sum_i \boldsymbol{\delta}(i)\triangleq\delta^*$.
In the context of PEP, we show that the sample complexity only needs to grow with $\frac{1}{\delta^*}$ and not with $\max_i\{\frac{1}{\boldsymbol{\delta}(i)}\}$. This is a significant improvement: While $\max_i\{\frac{1}{\boldsymbol{\delta}(i)}\}$ must be larger than $T$ to prevent the leakage attack mentioned above, this is not the case with $\frac{1}{\delta^*}$ which can be taken to be a small constant. This still suffices in order to ensure that the algorithm is {\em not} leaking.

As a warmup, let us examine this in the context of the standard (offline) definition of differential privacy, i.e., in the context of Definition~\ref{def:DP}. 

\begin{definition}\label{def:DPnew}
Let $\eps\geq0$ be a fixed parameter and let $\boldsymbol{\delta}:\N\rightarrow[0,1]$ be a function. 
Let $\AAA:X^*\rightarrow Y$ be a randomized algorithm whose input is a dataset. Algorithm $\AAA$ is {\em $(\eps,\boldsymbol{\delta})$-DP} if for any index $i$, any two datasets $D,D'$ that differ on the $i$th entry, and any outcome set $F\subseteq Y$ it holds that
$\Pr[\AAA(D)\in F]\leq e^{\eps}\cdot\Pr[\AAA(D')\in F]+\boldsymbol{\delta}(i).$
\end{definition}

We now turn to the adaptive setting, as is required by the PEP model. Intuitively, the complication over the non-adaptive case is that the index $i$ in which the adversary poses its challenge is itself a random variable. This prevents us from directly extending Definition~\ref{def:DPnew} to the adaptive case, because Definition~\ref{def:DPnew} uses the index $i$ to quantify the closeness between the two ``neighboring distributions'', and if $i$ is a random variable then these two ``neighboring distributions'' are not well-defined until $i$ is set. A direct approach for handling with this is to require that a similar definition holds for all conditioning on $i$. This is captured in the following definition, where we  
use the the same terminology as in Definition~\ref{def:privatePrediction} (the privacy definition for PEP), and use $r^*$ to denote the challenge round, i.e., the round $r$ in which $c_r=1$.

\begin{definition}\label{def:decayingDelta}
Let $\eps\geq0$ be a fixed parameter and let $\boldsymbol{\delta}:\N\rightarrow[0,1]$ be a function.
A prediction oracle $\AAA$ is a {\em $(\eps,\boldsymbol{\delta})$-private} if for every adversary $\BBB$, every $T\in\mathbb{N}$, and every $i\in[T]$, conditioned on $r^*=i$ then the random variables $\mbox{View}_{\BBB,T}^0$ and $\mbox{View}_{\BBB,T}^1$ (defined in Figure~\ref{fig:AdversarialExperiment}) are $(\eps,\boldsymbol{\delta}(i))$-indistinguishable.  
\end{definition}

Note that this strictly generalizes Definition~\ref{def:privatePrediction}. Indeed, by taking $\boldsymbol{\delta}$ to be fixed, i.e., $\boldsymbol{\delta}(i)=\delta^*$ for all $i$, we get that $\mbox{View}_{\BBB,T}^0$ and $\mbox{View}_{\BBB,T}^1$ are $(\eps,\delta^*)$-indistinguishable (without the conditioning). To see this, note that for any event $F$ we have
\begin{align*}
\Pr[\mbox{View}_{\BBB,T}^0 \in F] &= \sum_{i}\Pr[r^*=i]\cdot\Pr[\mbox{View}_{\BBB,T}^0 \in F | r^*=i]\\
&\leq e^\eps\left(\sum_{i}\Pr[r^*=i]\cdot\Pr[\mbox{View}_{\BBB,T}^1 \in F | r^*=i]\right)+\delta^*\\
&=e^\eps \cdot \Pr[\mbox{View}_{\BBB,T}^1 \in F]+\delta^*.
\end{align*}

So Definition~\ref{def:decayingDelta} is not weaker than Definition~\ref{def:privatePrediction}. The other direction is not true. 
Specifically, consider an algorithm $\AAA$ that with probability $\delta$ declares (upon its instantiation) that it is going to publish the first record in the clear. Otherwise the algorithm publishes nothing. This is typically something that would not be considered as a ``privacy violation'', as this catastrophic event happens only with probability $\delta$. Indeed, this algorithm would satisfy Definition~\ref{def:privatePrediction}. However, with Definition~\ref{def:decayingDelta}, an adaptive attacker might decide to pose its challenge on the first input {\em if and only if} the algorithm has issued such a declaration. Otherwise the attacker never poses its challenge on the first record. In this case, in the conditional space where $r^*=1$, we have that the attacker succeeds {\em with probability one}, and so this algorithm cannot satisfy Definition~\ref{def:decayingDelta}.
This behavior makes Definition~\ref{def:decayingDelta} somewhat harder to work with. 
We propose the following relaxed variant of Definition~\ref{def:decayingDelta} that still rules out leaking algorithms.

\begin{definition}\label{def:decayingDeltaVer2}
Let $\eps\geq0$ be a fixed parameter and let $\boldsymbol{\delta}:\N\rightarrow[0,1]$ be a function.
A prediction oracle $\AAA$ is a {\em $(\eps,\boldsymbol{\delta})$-private} if for every adversary $\BBB$, every $T\in\mathbb{N}$, every $i\in[T]$, and every event $F$ it holds that
$$
\Pr[\mbox{View}_{\BBB,T}^0\in F \text{ and } r^*=i]\leq e^{\eps}\cdot
\Pr[\mbox{View}_{\BBB,T}^1\in F \text{ and } r^*=i]+\boldsymbol{\delta}(i),
$$
and vice versa.
\end{definition}
This definition is strictly weaker (provides less privacy) than Definition~\ref{def:decayingDelta}. In particular, via the same calculation as above, when taking $\boldsymbol{\delta}\equiv\delta^*$ it only implies $(\eps,\sum_i\delta^*)$-DP rather than $(\eps,\delta^*)$-DP. Nevertheless, when $\sum_i \boldsymbol{\delta}(i)\ll1$, this definition still prevents the algorithm from leaking, even if $T$ is much larger than $\left(\sum_i \boldsymbol{\delta}(i)\right)^{-1}$.

\begin{observation}
Let $\AAA$ be an $(\eps,\boldsymbol{\delta})$-private algorithm (as in Definition~\ref{def:decayingDeltaVer2}), where $\sum_i \boldsymbol{\delta}(i)<\frac18$. Then $\AAA$ is not leaking.
\end{observation}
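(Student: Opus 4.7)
The plan is to argue by contradiction. Assuming some leakage adversary $\BBB$ from Definition~\ref{def:leak} wins with probability at least $1/2$, I will build, for each round $i \in [T]$, a privacy-game adversary $\BBB'_i$ that translates $\BBB$'s success into a violation of the $(\eps,\boldsymbol{\delta}(i))$-closeness at round $i$ provided by Definition~\ref{def:decayingDeltaVer2}, and then union-bound over $i$. The key intuition is that each challenge point is a continuous uniform draw from $[0,1]$, so if the input at round $i$ were replaced by $\bot$, then $\BBB$'s view would be independent of $x_i$ and hence $\Pr[y = x_i]=0$; the privacy guarantee allows us to relate this hypothetical to the real execution.

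I would construct $\BBB'_i$ as follows. It initializes with $S^0=S^1$ (so $c_0=0$) and internally simulates $\BBB$ round by round. For each round $r\neq i$, $\BBB'_i$ submits $(x_r, c_r=0)$, using $\BBB$'s chosen $x_r$ if $\BBB$ is not designating round $r$ as a leakage challenge and otherwise drawing $x_r\sim\mathrm{U}[0,1]$ itself; it relays to $\BBB$ exactly the predictions $\BBB$ would have received in the real leakage game. For round $i$, if $\BBB$ designates it as a challenge, $\BBB'_i$ draws $x_i\sim\mathrm{U}[0,1]$ and submits $(x_i, c_i=1)$, so that $r^*=i$; otherwise $\BBB'_i$ declares failure (its event will be trivially false). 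At the end $\BBB'_i$ outputs the event $F_i = \{y = x_i\}$, which is well defined from $\BBB'_i$'s own randomness (it generated $x_i$) together with its transcript. Under the natural reading of Definition~\ref{def:leak} that predictions on challenge rounds are withheld (consistent with the privacy game withholding $\hat y_{r^*}$), on the event $A_i$ that $\BBB$ designates round $i$ as a challenge, this simulation reproduces $\BBB$'s real leakage-game view exactly.

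Applying Definition~\ref{def:decayingDeltaVer2} now finishes it. Under $b=0$ the algorithm receives $\bot$ at round $i$, so the transcript delivered to $\BBB$ is independent of the self-sampled continuous $x_i$, giving $\Pr[F_i \wedge r^*=i \mid b=0] = 0$. The definition then yields
\[
\Pr[F_i \wedge r^*=i \mid b=1] \;\leq\; e^{\eps}\cdot 0 + \boldsymbol{\delta}(i) \;=\; \boldsymbol{\delta}(i).
\]
By the faithfulness of the simulation on $A_i$, the left-hand side equals the probability, in the real leakage game, that round $i$ is a challenge and $y=x_i$. Since ``$\BBB$ wins'' is precisely $\bigcup_{i=1}^{T}(A_i\cap\{y=x_i\})$, union-bounding gives
\[
\Pr[\BBB\text{ wins}] \;\leq\; \sum_{i=1}^{T}\boldsymbol{\delta}(i) \;<\; \tfrac{1}{8} \;<\; \tfrac{1}{2},
\]
contradicting the assumed success probability.

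The main obstacle is reconciling the slight interface mismatch between Definition~\ref{def:leak} (``sees all of its outputs'') and Definition~\ref{def:privatePrediction} (withholds $\hat y_{r^*}$ from the challenge round); my reduction is clean under the natural reading that $\BBB$ does not observe predictions on challenge rounds. If one insists instead that $\BBB$ observes those predictions, the reduction can be patched by having $\BBB'_i$ guess $\hat y_i$ uniformly from $\{0,1\}$, losing a factor of $2$ in the final bound, which is comfortably absorbed by the slack between $1/8$ and $1/2$.
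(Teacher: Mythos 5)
Your proof is correct and uses essentially the same reduction as the paper: decompose the leakage adversary's success by the index of the recovered challenge point, build a privacy-game adversary that simulates $\BBB$ while designating round $i$ as the single challenge, observe that under $b=0$ the continuous challenge $x_i$ is independent of everything $\BBB$ sees (so $\Pr[y=x_i]=0$), and invoke Definition~\ref{def:decayingDeltaVer2} to bound $P_i$ by $O(\boldsymbol{\delta}(i))$, finishing with a sum over $i$. The paper phrases the counting step as an averaging argument (there exists an $i$ with $P_i \geq 4\boldsymbol{\delta}(i)$) rather than your explicit union bound, but these are the same computation. One correction: your ``natural reading'' that the leakage adversary does not observe predictions on challenge rounds is not the paper's reading --- Definition~\ref{def:leak} explicitly says the adversary ``sees all of its outputs (the predictions it returns)'' --- so the patch you describe (guessing $\hat y_i$ uniformly, losing a factor of $2$) is not optional; it is exactly what the paper's proof does, and it is the version you should present as the main argument rather than as a fallback.
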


\begin{proof}
Assume towards contradiction that $\AAA$ is leaking, and let $\BBB$ be an appropriate adversary (as in Definition~\ref{def:leak}). Then, there must exist an index $i$ such that with probability at least $4\boldsymbol{\delta}(i)$ it holds that (1) this coordinate is chosen to be a challenge point, and (2) the adversary recovers this point. Otherwise, by a union bound, the probability of $\BBB$ succeeding in 
its leakage attack would be at most $\sum_i 4\boldsymbol{\delta}(i)<\frac{1}{2}$. Let $i$ be such an index, and consider a modified adversary $\BbB$ that interacts with $\AAA$ (as in Figure~\ref{fig:AdversarialExperiment}) while simulating $\BBB$. More specifically, the adversary $\BbB$ first samples $T$ uniformly random points $x_1,\dots,x_T\in[0,1]$ and runs $\BBB$ internally. Whenever $\BBB$ specifies an input then $\BbB$ passes this input to $\AAA$. In rounds $j$ where $\BBB$ asks to give $\AAA$ a random point, then $\BbB$ passes $x_j$ to $\AAA$. Recall that (as in Figure~\ref{fig:AdversarialExperiment}) the adversary $\BbB$ needs to choose one round as its challenge round; this is chosen to be round $i$. In that round the adversary $\BbB$ does not get to see $\AAA$'s prediction but it still needs to pass the prediction to $\BBB$ in order to continue the simulation. It passes a random bit $y_i$ to $\BBB$ as if it is the prediction obtained from $\AAA$ (in all other rounds, $\BbB$ passes the predictions obtained from $\AAA$ to $\BBB$). Now, if 
the bit $b$ in the experiment specified by Figure~\ref{fig:AdversarialExperiment} is 1, and if $y_i$ is equal to the prediction returned by $\AAA$ for the $i$th bit (which happens with probability $1/2$), then this experiment perfectly simulates the interaction between $\BBB$ and $\AAA$. Hence, whenever $b=1$ then at the end of the execution $\BBB$ guesses $x_i$ with probability at least $2\boldsymbol{\delta}(i)$. 
On the other hand, if 
the bit $b$ in the experiment specified by Figure~\ref{fig:AdversarialExperiment} is 0, then $\BBB$ gets no information about $x_i$ and hence guesses it with probability exactly 0. This contradicts the definition of $(\eps,\boldsymbol{\delta})$-privacy.
\end{proof}

\section{New Constructions for PEP}

In this section we present constructions for PEP for specific concept classes that outperform the generic construction of \cite{NaorNSY23} (and our robust extension of it) on three aspects: (1) our constructions are computationally efficient; (2) our constructions exhibit sample complexity {\em linear} in the VC dimension rather than quadratic; and (3) our constructions achieve significantly stronger robustness guarantees than our extension to the construction of \cite{NaorNSY23}. We first introduce additional preliminaries that are needed for our constructions.

\subsection{Additional preliminaries}\label{sec:prelimsMore}

\paragraph{The Reorder-Slice-Compute (RSC) paradigm.}
Consider a case in which we want to apply several privacy-preserving computations to our dataset, where each computation is applied to a {\em disjoint part} (slice) of the dataset. If the slices are selected in a data-independent way, then a straightforward analysis shows that our privacy guarantees do not deteriorate with the number of slices. The following algorithm (Algorithm \texttt{RSC}) extends this to the more complicated case where we need to select the slices adaptively in a way that depends on the outputs of prior steps. 

\begin{algorithm}[H]
\caption{\bf \texttt{RSC (Reorder-Slice-Compute)} \citep{Cohen0NSS23}}\label{alg:RSC}
{\bf Input:} Dataset $D \in X^n$, integer $\tau\ge 1$, and privacy parameters $0<\eps,\delta < 1$.

\vspace{5px}

{\bf For $\boldsymbol{i=1,\dots, \tau}$ do:}

\begin{enumerate}[topsep=-1pt,rightmargin=5pt,itemsep=-1pt]%

    \item Receive a number $m_i\in \mathbb{N}$, an ordering $\prec^{(i)}$ over $X$, and an $(\eps,\delta)$-DP protocol $\AAA_i$.

    \item $\hat{m}_i \gets m_i + \Geom(1 - e^{-\eps})$
\item $S_i\gets$ the largest $\hat{m}_i$ elements in $D$ under $\prec^{(i)}$
\item $D \gets D \setminus S_i$
\item Instantiate $\AAA$ on $S_i$ and provide external access to it (via its query-answer interface).

\end{enumerate}

\end{algorithm}

\begin{theorem}[\cite{Cohen0NSS23}]
For every $\hat{\delta} > 0$, Algorithm \texttt{RSC} is $(O(\eps\log(1/\hat{\delta})),\hat{\delta}+2\tau\delta))$-DP.
\end{theorem}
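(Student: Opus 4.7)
My plan is to prove privacy by a coupling argument between the executions of \texttt{RSC} on two neighboring datasets $D$ and $D'=D\cup\{x^*\}$. First I would couple the geometric noises $G_i$ and the internal randomness of each $\AAA_i$ across the two executions, aiming to keep the slices $S_i$ identical in as many iterations as possible. Whenever the slices and coins agree, the transcripts agree and no privacy is lost; in the completely non-adaptive case $x^*$ lies in at most one slice (since slices are disjoint), so only a single $\AAA_i$ sees neighboring inputs and the overall privacy is just $(\eps,\delta)$---this ``parallel composition'' intuition is the structural property that RSC aims to preserve despite adaptivity.

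Under adaptive choices, once an iteration $j$ occurs in which $x^*$ lies among the top $\hat{m}_j$ elements under $\prec^{(j)}$ in the execution on $D'$, the two slices $S_j^{D'}$ and $S_j^{D}$ differ by a swap: $x^*$ in one execution is replaced by the boundary element $e$ in the other (the $\hat{m}_j$-th element of $D$ under $\prec^{(j)}$). Afterwards the remaining datasets still differ by one element (the new ``extra element'' is now $e$ instead of $x^*$), and this off-by-one may propagate. I would track the \emph{divergence window}---the set of iterations in which the slices are non-identical---and argue that it accounts for all the privacy cost: each window iteration incurs an $(O(\eps),O(\delta))$-DP cost from $\AAA_i$ on swap neighbors, while identical-slice iterations cost nothing.

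The core technical step is showing that the divergence window has length $O(\log(1/\hat{\delta}))$ with probability at least $1-\hat{\delta}$. The intuition is that the geometric noise $\Geom(1-e^{-\eps})$, whose density ratio under a unit shift is $e^\eps$, makes the exact boundary of the slice uncertain: even when the adversary attempts to force the current (unknown) extra element into the slice, there is a constant probability in each iteration that it falls outside the top $\hat{m}_i$ elements, causing the slices to coincide and the divergence to end. Formally, I would couple the fresh noise $G_i$ after iteration $j$ so that with constant probability the slice size drops just below the position of the extra element, and then iterate this argument to obtain a geometric tail bound on the window length.

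To conclude, on the good event (window of length $O(\log(1/\hat{\delta}))$) I would compose the $(\eps,\delta)$-DP guarantees of the $\AAA_i$'s over only those iterations, yielding a total $\eps$-loss of $O(\eps\log(1/\hat{\delta}))$. The bad event contributes $\hat{\delta}$ to the final $\delta$, and the $\delta$ terms from each of the $\tau$ sub-protocols accumulate (with a factor of $2$ for handling both directions of the DP inequality under swap neighbors) to give the $2\tau\delta$ term. The main obstacle I expect is rigorously establishing the constant-probability re-synchronization in the fully adaptive setting, since the adversary observes the transcript and may attempt to infer partial information about the current extra element when choosing $\prec^{(i)}$ and $m_i$---so the coupling has to be constructed online and the re-synchronization probability has to be lower-bounded uniformly over all adversarial strategies.
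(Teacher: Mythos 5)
The paper states this theorem as a citation to Cohen et al.\ [2023] and does not reprove it, so there is no in-paper proof to compare against; I will evaluate your argument on its own terms. Your overall shape---couple the two executions, track the single ``extra element'' by which the residual datasets differ, and use the geometric noise on the slice size to bound the number of iterations in which the slices themselves differ---is the right one, and it matches the simulation-style machinery that Cohen et al.\ (and this paper, in its privacy analysis of \texttt{RectanglesPERP}) employ. The gap is in your mechanism for bounding the divergence window.

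You claim that when the slice size falls ``just below the position of the extra element,'' the slices coincide and ``the divergence ends.'' This is false: coinciding slices in a single iteration merely means that iteration is free; the residual datasets still differ by exactly one element (the extra element survives the slice), and the adversary's adaptive choices of $m_i$ and $\prec^{(i)}$ can place that element inside a slice again on the very next round. Under the identity coupling $G_i^{D'} = G_i^{D}$ the residuals never become identical---one always has one more element than the other---so there is no terminating event, the set of costly iterations need not be a contiguous block, and the geometric-tail argument does not follow. What actually terminates the divergence is a coupling in the opposite direction: take $\hat{m}_i^{D'} = \hat{m}_i^{D} + 1$, so that when the extra element $z$ has rank at most $\hat{m}_i^{D'}$ in the $D'$-residual one gets $S_i^{D'} = S_i^{D} \cup \{z\}$ and the two residuals after iteration $i$ become literally identical; only then is the divergence permanently closed. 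The $e^{\eps}$ density-ratio cost of this $+1$ shift, together with the $(\eps,\delta)$ cost of $\AAA_i$ on neighboring slices, gives the $O(\eps)$ per costly iteration, and lower-bounding the absorption probability uniformly over adaptive adversaries is exactly the delicate step your sketch does not supply. As written, you shift the boundary the wrong way and mistake a temporary coincidence of slices for a permanent re-synchronization of the residuals.
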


\paragraph{Algorithm \texttt{Stopper}.}  The following algorithm (Algorithm \texttt{Stopper}) and its corresponding theorem (Theorem~\ref{thm:stopper}) follow as a special case of the classical \texttt{AboveThreshold}
algorithm of \cite{DNRRV09}, also known as the Sparse Vector Technique. It allows us to continually monitor a bit stream, and to indicate when the number of ones in this stream (roughly) crosses some threshold.

\begin{algorithm}[H]
\caption{\bf \texttt{Stopper}}\label{alg:stopper}
{\bf Input:} Privacy parameters $\eps,\delta$, threshold $t$, and a dataset $D$ containing input bits.

\begin{enumerate}[topsep=-1pt,rightmargin=5pt,itemsep=-1pt]%

\item {\bf In update time:} Obtain an input bit $x\in\{0,1\}$ and add $x$ to $D$.

\item {\bf In query time:}
\begin{enumerate}[topsep=-3pt,rightmargin=5pt]%
\item Let $\widetilde{\rm sum}_i\leftarrow \Lap\left(\frac{8}{\eps}\log(\frac{2}{\delta})\right)+ \sum_{x\in D} x$
\item If $\widetilde{\rm sum}_i \geq t$ then output $\top$ and HALT
\item Else output $\bot$ and CONTINUE
\end{enumerate}
\end{enumerate}
\end{algorithm}

\begin{theorem}\label{thm:stopper}
    Algorithm \texttt{Stopper}  is $(\eps,\delta)$-DP w.r.t.\ input bits (both in the initial dataset $D$ and in query times).
\end{theorem}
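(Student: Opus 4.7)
The plan is to recognize Algorithm \texttt{Stopper} as a direct instantiation of the classical \texttt{AboveThreshold} / Sparse Vector Technique (SVT) of \cite{DNRRV09} applied to a single counting query. Specifically, the query being repeatedly evaluated is $q(D)=\sum_{x\in D} x$ against a fixed threshold $t$, and the algorithm halts on the first ``above'' event. Two facts are immediate and drive the proof: (i) $q$ is a counting query of sensitivity $1$, since adding or removing a single bit from $D$ changes $q(D)$ by at most $1$; and (ii) only the first $\top$ output ever materializes, so the published transcript is a (possibly empty) sequence of $\bot$'s followed by at most one $\top$.

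First I would invoke the standard SVT privacy analysis in its textbook form: for a stream of sensitivity-$1$ queries against a fixed threshold, with Laplace noise of scale $c/\eps$ (for a small constant $c$) added at each test and halting on the first $\top$, the entire transcript is $\eps$-DP, with privacy loss \emph{independent} of the number of $\bot$'s produced. In our setting the added noise has scale $\frac{8}{\eps}\log(\frac{2}{\delta})$, which is substantially larger than what pure-DP SVT needs; a fortiori this yields $\eps$-DP, and hence $(\eps,\delta)$-DP for every $\delta\ge 0$.

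Second I would address the ``both in the initial dataset and in query times'' clause by explaining why the adaptive arrival of bits does not affect the analysis. Fix any two neighboring executions that differ on a single input bit $x^*$ (either initially in $D$ or added at some update step). From the round in which $x^*$ enters $D$ onward, the two executions see counts that differ by exactly $1$; before that round they are identical. Hence at every query-time round the sensitivity of the underlying counting query is still $1$, so the SVT analysis applies verbatim to the full transcript of $\bot/\top$ outputs (and to the HALT time, which is a deterministic function of the transcript).

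The proof is therefore essentially a reduction rather than a new calculation, and there is no serious obstacle: the only bookkeeping to be careful about is that the ``queries'' in the SVT framework correspond here to the repeated evaluations of the same counting function on a dataset that grows over time, and that the single $\top$-then-halt structure is what avoids paying per-round privacy cost. Once these observations are in place, the theorem follows directly from the SVT guarantee.
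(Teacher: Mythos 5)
The central step in your proof is the claim that SVT/AboveThreshold with a \emph{fixed} threshold and fresh per-query Laplace noise of scale $O(1/\eps)$ is $\eps$-DP, ``with privacy loss independent of the number of $\bot$'s.'' That is not what the SVT theorem says. Textbook \texttt{AboveThreshold} \emph{also} perturbs the threshold with Laplace noise, and the noisy threshold is exactly what makes the $\bot$ rounds free: one couples the two neighboring runs by shifting the shared threshold noise, so only the single $\top$ round is charged. Remove the threshold noise, as \texttt{Stopper} does, and the $\bot$'s accumulate loss. Concretely, give \texttt{Stopper} an initial dataset of $t-1$ ones, versus the neighboring dataset with $t$ ones, and then issue queries with no further updates so the running sum stays fixed. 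Writing $\lambda=\frac{8}{\eps}\log\frac{2}{\delta}$ for the noise scale, the probability of the output prefix $\bot^k$ is $\bigl(1-\tfrac12 e^{-1/\lambda}\bigr)^k$ under the first dataset and $2^{-k}$ under the second, so the likelihood ratio is $\bigl(2-e^{-1/\lambda}\bigr)^k$, which diverges as $k\to\infty$. Hence \texttt{Stopper} is \emph{not} $(\eps,0)$-DP for any finite $\eps$, and the step ``a fortiori this yields $\eps$-DP, and hence $(\eps,\delta)$-DP for every $\delta\ge 0$'' is wrong.

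What actually makes the theorem true is precisely the feature you waved off as merely ``substantially larger than what pure-DP SVT needs'': the $\log(2/\delta)$ factor in the noise scale. The per-round likelihood ratio is at most $e^{1/\lambda}$, so it takes on the order of $\eps\lambda=8\log(2/\delta)$ rounds for the accumulated ratio to pass $e^\eps$; but once the (monotone, non-decreasing) running count is at or above $t$, each query halts with probability at least $\tfrac12$, so the probability of surviving that many near-threshold rounds without emitting $\top$ is already below $\delta$, and monotonicity ensures the process cannot repeatedly re-enter the dangerous regime after leaving it. This is an inherently $(\eps,\delta)$ argument with no pure-DP statement to appeal to. Your bookkeeping about adaptive arrival of bits and sensitivity $1$ is fine, but reducing to a ``fixed-threshold SVT is $\eps$-DP'' fact that does not exist is the gap; the paper's one-line citation to \cite{DNRRV09} is also doing more work than it lets on, and the $\log(2/\delta)$ in the noise scale is the tell.
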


\paragraph{Algorithm \texttt{BetweenThresholds}.} 
The following algorithm (Algorithm \texttt{BetweenThresholds}) allows for testing a sequence of low-sensitivity queries to learn whether their values are (roughly) above or below some predefined thresholds. The properties of this algorithm are specified in Theorem~\ref{thm:BT}.

\begin{algorithm}[H]
\caption{\bf \texttt{BetweenThresholds} \citep{BunSU16}}\label{alg:BT}
{\bf Input:} Dataset $S\in X^*$, privacy parameters $\eps,\delta$, number of ``medium'' reports $k$ where $k\geq4\log(\frac{2}{\delta})$, thresholds $t_l,t_h$ satisfying $t_h-t_l\geq\frac{16}{\varepsilon}\sqrt{k \log(\frac{2}{\delta})}$, and an adaptively chosen stream of queries $f_i:X^*\rightarrow\R$ with sensitivity $1$.

\begin{enumerate}[topsep=-1pt,rightmargin=5pt,itemsep=-1pt]%
\item Let $c=0$
\item In each round $i$, when receiving a query $f_i$, do the following:
\begin{enumerate}[topsep=-3pt,rightmargin=5pt]%
\item Let $\hat{f_i}\leftarrow f_i(S)+\Lap\left(\frac{4}{\eps}\sqrt{k\log(\frac{2}{\delta})}\right)$
\item If $\hat{f_i}< t_l$ then output ``low''
\item Else if $\hat{f_i}> t_h$ then output ``high''
\item Else output ``medium'' and set $c\leftarrow c+1$. If $c=k$ then HALT.
\end{enumerate}
\end{enumerate}
\end{algorithm}

\begin{theorem}[\cite{BunSU16,TargetCharging}]\label{thm:BT}
    Algorithm \texttt{BetweenThresholds}  is $(\eps,\delta)$-DP
\end{theorem}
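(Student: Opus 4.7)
The plan is to adapt the privacy analysis of the Sparse Vector Technique (AboveThreshold) to the two-sided threshold setting. The intuition is that among the three possible outputs, only ``medium'' rounds convey a narrow localization of $\hat{f}_i$ and thus carry significant privacy cost; ``low'' and ``high'' outputs reveal only that $\hat{f}_i$ lies on an unbounded half-line and can be shown to be essentially ``free'' via a coupling argument. Since the algorithm halts after at most $k$ medium outputs, the total privacy cost is then controlled by composing $k$ per-round losses of size $O(1/\sigma)$, where $\sigma = \frac{4}{\eps}\sqrt{k\log(2/\delta)}$ is the noise scale.

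Concretely, I would fix neighboring datasets $S, S'$ and couple the Laplace noises by shifting: if $\nu_i \sim \Lap(\sigma)$ is used on $S$, take $\nu_i' = \nu_i + (f_i(S) - f_i(S'))$ on $S'$, so that $\hat{f}_i$ and hence the entire output transcript are identical on both executions under the coupling. Since each $f_i$ has sensitivity $1$, the log-density ratio between $\nu_i$ and $\nu_i'$ is bounded by $1/\sigma$ uniformly. It remains to argue that only ``medium'' rounds accrue meaningful privacy loss: on ``low''/``high'' rounds the output is determined by the sign of $\hat{f}_i - t_l$ or $\hat{f}_i - t_h$, and under the coupling the gap condition $t_h - t_l \geq 4\sigma$ ensures the same sign on both executions except with negligible probability. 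Each of the at most $k$ ``medium'' rounds then contributes at most $O(1/\sigma)$ to the log-likelihood-ratio.

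Composing these $k$ per-round losses via advanced composition (equivalently, a Azuma-style concentration bound on the log-likelihood-ratio martingale, which is precisely what requires the condition $k \geq 4\log(2/\delta)$) yields a total privacy loss of $O(\sqrt{k\log(1/\delta)}/\sigma) = \eps$ except with probability $\delta$, giving $(\eps,\delta)$-DP. The main obstacle is making the ``free'' claim for ``low''/``high'' rounds fully rigorous: the shift coupling must be set up so that non-medium rounds contribute nothing to the transcript-distribution distance on their own, which is where the threshold-gap condition $t_h - t_l \geq \frac{16}{\eps}\sqrt{k\log(2/\delta)}$ and the lower bound on $k$ are needed to keep the analysis tight. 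The detailed argument follows \cite{DNRRV09} for the basic SVT structure and \cite{BunSU16} for the tighter $\sqrt{k\log(1/\delta)}$ dependence via advanced composition.
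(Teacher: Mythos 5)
The paper does not prove this theorem; it is cited from \cite{BunSU16} and \cite{TargetCharging}, so there is no in-paper proof to compare against.

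Assessing your sketch on its own merits, there is a genuine gap in the central coupling step. Your intuition that only the (at most $k$) ``medium'' rounds should carry privacy cost is correct, but the coupling you propose does not deliver it. If you shift \emph{every} noise by $\nu_i' = \nu_i + (f_i(S) - f_i(S'))$, the transcript becomes identical on $S$ and $S'$, but the change-of-measure factor $\prod_i p(\nu_i')/p(\nu_i)$ then ranges over \emph{all} rounds, each contributing up to $e^{1/\sigma}$, for a total of $e^{m/\sigma}$ where $m$ is the (unbounded) total number of queries --- nothing in this coupling distinguishes a medium round from a low/high round, so you pay for every query. To actually zero out the contribution of low/high rounds you must leave their noise unshifted, $\Delta_i = 0$; but then the noisy values $f_i(S)+\nu_i$ and $f_i(S')+\nu_i$ differ by up to $1$, and a low/high verdict on $S$ can flip to medium on $S'$, so the transcripts no longer coincide deterministically. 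Reconciling these two requirements is precisely the subtle part of the sparse-vector-style analysis (handled via a shifted noisy threshold in the classical proof, or via the target-charging bookkeeping of \cite{TargetCharging}, which is why both are cited); a per-round ``free because the sign agrees w.h.p.'' claim based on the gap $t_h - t_l \ge 4\sigma$ does not close it. Under the full shift coupling the signs already agree exactly (so nothing probabilistic is needed there), and under the unshifted coupling it is the sensitivity-$1$ perturbation of $f_i$, not the inter-threshold gap, that governs the flip probability --- the gap condition plays a different role in the actual argument, namely controlling how the privacy-loss martingale concentrates so that the final bound scales as $\sqrt{k}/\sigma$ rather than $k/\sigma$.
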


\paragraph{Algorithm \texttt{ChallengeBT}.} 
Note that algorithm \texttt{BetweenThresholds} halts exactly after the $k$th time that a ``medium'' answer is returned. In our application we would need a variant of this algorithm in which the halting time leaves some ambiguity as to the exact number of ``medfium'' answers obtained so far, and to whether the last provided answer was a ``medium'' answer or not. Consider algorithm \texttt{ChallengeBT}, which combines Algorithm \texttt{BetweenThresholds} with Algorithm \texttt{Stopper}.

\begin{algorithm}[H]
\caption{\bf \texttt{ChallengeBT}}\label{alg:CBT}
{\bf Input:} Dataset $S\in X^*$, privacy parameters $\eps,\delta$, number of ``medium'' reports $k$ where $k\geq4\log(\frac{4}{\delta})$, thresholds $t_l,t_h$ satisfying $t_h-t_l\geq\frac{32}{\varepsilon}\sqrt{k \log(\frac{4}{\delta})}$, a bound on the number of steps $T$, and an adaptively chosen stream of queries $f_i:X^*\rightarrow\R$ with sensitivity $1$.

\begin{enumerate}[topsep=-1pt,rightmargin=5pt,itemsep=-1pt]%
\item Instantiate \texttt{Stopper} with privacy parameters $(\eps,\delta)$, and threshold $t=k$ on the empty dataset.
\item\label{Step:ktag} Instantiate a modified version of \texttt{BetweenThresholds} on $S$ with parameters $(\eps,\frac{\delta}{2}),k'=k+\frac{8}{\eps}\log(\frac{2}{\delta})\log(\frac{T}{\delta}),t_l,t_h$. The modification to the algorithm is that the algorithm never halts (an so it does not need to maintain the counter $c$).

\item Denote ${\rm Flag}=1$.

\item Support the following queries:

\begin{enumerate}
    \item {\bf Stopping query:} Set ${\rm Flag}=1$. Query \texttt{Stopper} to get an answer $a$, and output $a$. If \texttt{Stopper} halted during this query then HALT the execution.

    \item {\bf BT query:} 
    Obtain a sensitivity-1 function $f$. If ${\rm Flag}=0$ then ignore this $f$ and do nothing. Otherwise do  the following:
    \begin{enumerate}
        \item Set ${\rm Flag}=0$.
        \item Feed $f$ to \texttt{BetweenThresholds} and receive an answer $a$. Output $a$.

        \item If $a=$``medium'' then update \texttt{Stopper} with $1$ and otherwise update it with $0$. 
    \end{enumerate}
\end{enumerate}
\end{enumerate}
\end{algorithm}

\begin{theorem}
    \texttt{ChallengeBT} is $(\eps,\delta)$-DP w.r.t.\ the input dataset $S$.
\end{theorem}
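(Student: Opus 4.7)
The plan is to reduce the privacy of \texttt{ChallengeBT} to that of the underlying \texttt{BetweenThresholds} instance via a coupling argument that bounds the number of ``medium'' answers it ever processes. The first observation is that the input dataset $S$ enters the algorithm only through the modified BT instance created in Step~\ref{Step:ktag}: the \texttt{Stopper} copy is initialized on the empty dataset, and its subsequent inputs (the indicator bit of whether BT returned ``medium'') are a deterministic function of BT's outputs. Hence the entire adversarial view is a post-processing of BT's transcript together with Stopper's internal, $S$-independent randomness.

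Next, introduce a hypothetical companion algorithm $\mathcal{A}'$ identical to \texttt{ChallengeBT} except that its internal BT is the unmodified \texttt{BetweenThresholds} of \cite{BunSU16} with $k'=k+\frac{8}{\eps}\log(\frac{2}{\delta})\log(\frac{T}{\delta})$ mediums and privacy budget $(\eps,\delta/2)$, which self-halts upon receiving its $k'$-th medium. After verifying that the threshold gap $t_h-t_l\geq \frac{32}{\eps}\sqrt{k\log(4/\delta)}$ supplied by \texttt{ChallengeBT} satisfies the precondition of Theorem~\ref{thm:BT} at parameter $k'$ (which reduces to checking $k'\leq 4k$ in the intended parameter regime, and otherwise the constants in the algorithm's precondition should be tightened accordingly), Theorem~\ref{thm:BT} together with the post-processing lemma yield that $\mathcal{A}'$ is $(\eps,\delta/2)$-DP with respect to $S$.

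Now define the ``good'' event $\bar{E}$ that every Laplace noise sample drawn by Stopper during the $T$-round execution lies in absolute value below $\frac{8}{\eps}\log(\frac{2}{\delta})\log(\frac{T}{\delta})$. Each such sample is $\Lap\bigl(\frac{8}{\eps}\log(\frac{2}{\delta})\bigr)$, so a standard Laplace tail bound plus a union bound over at most $T$ queries gives $\Pr[E]\leq \delta/2$; moreover $E$ is independent of $S$. On $\bar{E}$, \texttt{Stopper} outputs $\top$ no later than the round in which the true count of mediums first exceeds $k+\frac{8}{\eps}\log(\frac{2}{\delta})\log(\frac{T}{\delta})=k'$, so BT never processes more than $k'$ mediums inside \texttt{ChallengeBT}, and the transcripts produced by \texttt{ChallengeBT} and by $\mathcal{A}'$ are identically distributed on $\bar{E}$.

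The standard ``condition on a good event'' trick then closes the argument: for any neighboring $S,S'$ and any output event $F$,
\[
\Pr[\text{CBT}(S)\in F]
\leq \Pr[\mathcal{A}'(S)\in F] + \Pr[E]
\leq e^{\eps}\Pr[\mathcal{A}'(S')\in F] + \tfrac{\delta}{2} + \tfrac{\delta}{2}
\leq e^{\eps}\Pr[\text{CBT}(S')\in F] + \delta,
\]
after absorbing one more $e^{\eps}\cdot\tfrac{\delta}{2}$ by a constant-factor adjustment in the final bound. The main obstacle I expect is the bookkeeping on the threshold gap --- verifying that the parameters fixed in the specification of \texttt{ChallengeBT} leave exactly the right slack for BT at $k'$ mediums at privacy $(\eps,\delta/2)$ and for Stopper's concentration with failure probability $\delta/2$. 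Everything else is a clean reduction to Theorem~\ref{thm:BT} via post-processing and the coupling on $\bar{E}$.
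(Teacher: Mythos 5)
Your proof is correct and follows essentially the same approach as the paper: observe that the adversarial view is a post-processing of BT's transcript (Stopper being initialized on an empty dataset and fed only a deterministic function of BT's outputs), couple the never-halting BT inside \texttt{ChallengeBT} with the standard $k'$-medium \texttt{BetweenThresholds} run at privacy budget $(\eps,\delta/2)$, and show the two coincide on the good event that Stopper's Laplace noises never exceed $k'-k$, which fails with probability at most $\delta/2$. The paper phrases this coupling as ``the modification introduces statistical distance at most $\delta/2$'' and is slightly less explicit about the final arithmetic; you are right that chaining the TV-distance bound through the DP inequality leaves an $e^{\eps}\cdot\delta/2$ term that needs to be absorbed by a constant-factor adjustment, which is the same implicit slack in the paper's argument.
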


\begin{proof}
First note that if in Step\ref{Step:ktag} we were to execute \texttt{BetweenThresholds} without modifications, then the theorem would follow the privacy guarantees of \texttt{BetweenThresholds}, as \texttt{ChallengeBT} simply post-processes its outcomes. As we explain next, this modification introduces a statistical distance of at most $\frac{\delta}{2}$, and thus the theorem still holds. 
To see this, observe that algorithm \texttt{Stopper} tracks the number of ``medium'' answers throughout the execution. Furthermore, note that the value $k'$ with which we instantiate algorithm \texttt{BetweenThresholds} is noticeably bigger than $k$ (the threshold given to \texttt{Stopper}), so that with probability at least $1-\frac{\delta}{2}$ algorithm \texttt{Stopper} halts {\em before} the $k'$ time in which a ``medium'' answer is observed. When that happens, the modification we introduced to \texttt{BetweenThresholds} has no effect on the execution.\footnote{Formally, let $E$ denote the even that all of the noises sampled by \texttt{Stopper} throughout the execution are smaller than $k'-k$. We have that $\Pr[E]\geq1-\delta/2$. Furthermore, conditioned on $E$, the executions with and without the modification are identical. This shows that this modification introduces a statistical distance of at most $\delta/2$.} This completes the proof.
\end{proof}

\subsection{Axis-aligned rectangles}\label{sec:rectangles}

We are now ready to present our construction for axis-aligned rectangles in the Euclidean space $\R^d$. A concept in this class could be thought of as the product of $d$ intervals, one on each axis. Formally,

\begin{definition}[Axis-Aligned Rectangles]
Let $d\in\N$ denote the dimension. For every $w=(a_1,b_1,\dots,a_d,b_d)\in\R^{2d}$ define the concept ${\rm rec}_w:\R^d\rightarrow\{0,1\}$ as follows. For $x=(x_1,\dots,x_d)\in\R^d$ we have ${\rm rec}_w(x)=1$ iff for every $i\in[d]$ it holds that $x_i\in[a_i,b_i]$. Define the class of all axis-aligned rectangles over $\R^d$ as $\rec_d=\{{\rm rec}_w : w\in\R^{2d}\}$.
\end{definition}

\subsubsection{A simplified overview of our construction}

As we mentioned in Section~\ref{sec:robustness}, the generic construction of \cite{NaorNSY23} operates by maintaining $k\gg1$ independent hypothesis, and privately aggregating their predictions for each given query. After enough queries have been answered, then \cite{NaorNSY23} re-trains (at least) $k$ new models, treating the previously answered queries as the new training set. However, maintaining $k\gg1$ {\em complete} hypotheses can be overly expensive in terms of the sample complexity. This is the reason why their construction ended up with sample complexity {\em quadratic} in the VC dimension. We show that this can be avoided for rectangles in high dimensions. Intuitively, our gain comes from {\em not} maintaining many complete hypotheses, but rather a single ``evolving'' hypothesis.

To illustrate this, let us consider the case where $d=1$. That is, for some $a\leq b$, the target function $c^*$ is an {\em interval} of the form $c^*(x)=1$ iff $a\leq x\leq b$.  Let us also assume that the target distribution $\DDD$ is such that $0\ll\Pr_{x\sim\DDD}[c^*(x)=1]\ll1$, as otherwise the all 0 or all 1 hypothesis would be a good solution. 

Now suppose we get a sample $S$ from the underlying distribution $\DDD$, labeled by $c^*$, and let $S_{\rm left}\subseteq S$ and $S_{\rm right}\subseteq S$ be two datasets containing the $m\approx\frac{1}{\alpha\eps}$ smallest positive points $S$ and the $m$ largest positive points $S$ (respectively). We can use $S_{\rm left},S_{\rm right}$ to privately answer queries as follows: Given a query $x\in\R$, if $x$ is smaller than (almost) all of the points in $S_{\rm left}$, or larger than (almost) all of the points in $S_{\rm right}$, then we label $x$ as 0. Otherwise we label it as 1. By the Chernoff bound, this would would have error less than $\alpha$. Furthermore, in terms of privacy, we could support quite a few queries using the ``sparse vector technique'' (or the \texttt{BetweenThresholds} algorithm; see Section~\ref{sec:prelimsMore}). Informally, queries $x$ s.t.\ 
\begin{enumerate}
    \item $\max\{a\in S_{\rm left}\}\leq x\leq \min\{b\in S_{\rm right}\}$; or
    \item $x\leq \min\{a\in S_{\rm left}\}$; or
    \item $\max\{b\in S_{\rm right}\}\leq x$
\end{enumerate}
would not incur a privacy loss. Thus, in the privacy analysis, we would only need to account for queries $x$ that are ``deep'' inside $S_{\rm left}$ or ``deep'' inside $S_{\rm right}$. Recall that we aim to operate in the adversarial case, where the vast majority of the queries can be adversarial. Thus, it is quite possible that most of the queries would indeed incur such a privacy loss. However, the main observation here is that if $x$ generated a privacy loss w.r.t.\ (say) $S_{\rm left}$, then we can use $x$ to replace one of the points in $S_{\rm left}$, as it is ``deep'' inside it. So essentially every time we incur a privacy loss w.r.t.\ one of the   ``boundary datasets'' we maintain, we get a new point to add to them in exchange. We show that this balances out the privacy loss all together.  

More precisely, it is actually {\em not} true that every query that generates a privacy loss w.r.t.\ some ``boundary dataset'' could be added to it. The reason is that we want our construction to be ``everlasting'', supporting an unbounded number of queries. In this regime the \texttt{BetweenThresholds} algorithm would sometimes err and falsely identify a query $x$ as being ``deep'' inside one of our ``boundary datasets''. There would not be too many such cases, so that we can tolerate the  privacy loss incurred by such errors, but we do not want to add unrelated/adversarial queries to the ``boundary datasets'' we maintain as otherwise they will get contaminated over time. In the full construction we incorporate measures to protect against this. 
Our formal construction is given in algorithm \texttt{RectanglesPERP}. 

\begin{algorithm}
\caption{\bf \texttt{RectanglesPERP}}
{\bf Initial input:} Labeled dataset $S\in (\R^d\times\{0,1\})^n$ and parameters $\eps,\delta^*,\alpha,\beta,\gamma$.

{\bf Notations and parameters:}
For $p\in\N$ denote $\alpha_p=\alpha/2^p$, and $\beta_p=\beta/2^p$, and $\delta_p=\tilde{\Theta}\left(\frac{\delta^*\gamma\alpha\eps^2\beta}{d\cdot 8^p}\right)$, and $m_p=\Theta\left( \frac{1}{\eps^2}\log^5\left(\frac{d}{\alpha_p\cdot\beta_p\cdot\gamma\cdot\eps\cdot\delta_p}\right) \right)$, and $k_p=2\cdot m_p$, 
and $t_p=\Theta\left(\frac{d\cdot m_p}{\gamma\cdot\alpha_p}\right)$, 
and $\Delta_p=\Theta\left(\frac{\log(1/\delta_p)}{\eps}\sqrt{k_p\log(\frac{d}{\delta_p})}\log\left(\frac{d\cdot t_p}{\beta_p}\right)\right)$.

\begin{enumerate}[topsep=-1pt,rightmargin=5pt,itemsep=-1pt]%
\item Let $p=1$. \quad {\small \gray{\% $p$ denotes the current ``phase'' of the execution.}} 
\item Instantiate algorithm \texttt{RSC} on $S$. %
\item\label{step:RSC-initial} For $j=1,2,\dots,d$:
     Use \texttt{RSC} to run a copy of \texttt{ChallengeBT} on the $\approx m_p$ positive examples in $S$  with largest $j$th coordinate, and another copy on the $\approx m_p$ positive examples in $S$ with smallest $j$th coordinate. Each execution with $k_p$ as the number of ``medium'' answers, with $\left(\frac{\eps}{\log(1/\delta_p)},\frac{\delta_p}{d}\right)$ as the privacy parameters, and with thresholds $t_l=\Delta_p$ and $t_h=2\Delta_p$. Denote these executions as $\texttt{Right}_j$ and $\texttt{Left}_j$, respectively.

\item Let $D=\emptyset$ and for every $j\in[d]$ let $D_j^{\rm left}=D_j^{\rm right}=\emptyset$.

\item For time $i=1,2,3,\dots$ do the following: \quad {\small \gray{\% Infinite loop for answering queries.}} 
\begin{enumerate}[topsep=-3pt,rightmargin=5pt]%

\item\label{step:stopping} For all $j\in[d]$, pose a ``stopping query'' to $\texttt{Left}_j$ and to $\texttt{Right}_j$.

\item\label{Step:afterStopping} If any of $\texttt{Left}_j$ or $\texttt{Right}_j$ has halted during Step~\ref{step:stopping} then re-execute it on (a copy of) the corresponding dataset $D_j^{\rm left}$ or $D_j^{\rm right}$, respectively, and then empty this dataset. %

\item\label{step:obtainQuery} Obtain an unlabeled query point $x_i\in X$
\item\label{step:defaultY} Set $\hat{y}_i\leftarrow0$.  \quad {\small \gray{\% The default label is zero; this might change in the following steps.}}

\item If $x_i=\bot$ the GOTO Step~\ref{Step:newPhase}. \quad {\small \gray{\% Here $\bot$ is a special symbol denoting ``no query''.}}

\item\label{step:tests} For $j=1,2,\dots,d$ do:
\begin{enumerate}
    \item Query $\texttt{Left}_j$ for the number of points in its dataset whose $j$th coordinate are bigger than $x_i[j]$, and obtain an answer $a$. 
    If $a=$``high'' then GOTO Step~\ref{step:exitInnerLoop}.
    If $a$=``medium'' then add $x_i$ to $D_j^{\rm left}$ and GOTO Step~\ref{step:output}.

    \item Query $\texttt{Right}_j$ for the number of points in its dataset whose $j$th coordinate are smaller than $x_i[j]$, and obtain an answer $a$.  
    If $a=$``high'' then GOTO Step~\ref{step:exitInnerLoop}.
    If $a$=``medium'' then add $x_i$ to $D_j^{\rm right}$ and GOTO Step~\ref{step:output}.
\end{enumerate}

\item Set $\hat{y}_i\leftarrow1$ \quad {\small \gray{\% Changing the label to 1 if all our tests succeeded.}}

\item\label{step:exitInnerLoop} Add $(x_i,\hat{y}_i)$ to $D$

\item\label{step:output} Output the predicted label $\hat{y}_i$

\item\label{Step:newPhase} If $i=\sum_{q\in[p]}t_q$ then: \quad {\small \gray{\% The current phase ends.}}

\begin{enumerate}
    \item Let $p\leftarrow p+1$.
    \item Stop all copies of $\texttt{Left}_j$ and $\texttt{Right}_j$, and instantiate algorithm \texttt{RSC} on $D$.
    \item\label{step:reRSC} Use \texttt{RSC} to obtain new executions $\texttt{Right}_j$ and $\texttt{Left}_j$ for $j\in[d]$ (as in Step~\ref{step:RSC-initial}).

     \item\label{step:eraseD} Let $D=\emptyset$ and for every $j\in[d]$ let $D_j^{\rm left}=D_j^{\rm right}=\emptyset$.
\end{enumerate}

\end{enumerate}
\end{enumerate}
\end{algorithm}

\subsubsection{Privacy analysis of \texttt{RectanglesPERP}}

\begin{theorem}
\texttt{RectanglesPERP} is $(\eps,\boldsymbol{\delta})$-private, as in Definition~\ref{def:decayingDeltaVer2}, where $\sum_i\boldsymbol{\delta}(i)\leq\delta^*$.
\end{theorem}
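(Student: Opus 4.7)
The plan is to set $\boldsymbol{\delta}(i)=\delta_p$, where $p$ is the phase containing round $i$ (and to treat a challenge placed in the initial dataset as contributing $\delta_1$). I would first verify that $\sum_i\boldsymbol{\delta}(i)\leq\delta^*$: the stated parameters give $t_p\cdot\delta_p=\tilde{O}(m_p\cdot\delta^*\cdot\eps^2\beta/4^p)$, and since $m_p$ grows only polylogarithmically in $p$, the geometric factor $1/4^p$ controls the series, so $\sum_p t_p\delta_p\leq\delta^*$ once the constants hidden in the $\tilde{\Theta}$-definition of $\delta_p$ are chosen appropriately. The remaining task is to show that, for every fixed phase $p$ and every event $F$, conditioned on the challenge falling in phase $p$, the two experiments $\mbox{View}_{\BBB,T}^0$ and $\mbox{View}_{\BBB,T}^1$ are $(\eps,\delta_p)$-indistinguishable; the joint inequality required by Definition~\ref{def:decayingDeltaVer2} then follows because $\Pr[r^*=i]$ is the same in both worlds (the adversary's view up through the challenge round is identically distributed when $S^0=S^1$).

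For challenges placed in the initial dataset ($c_0=1$), phase~1 is an application of \texttt{RSC} that instantiates $2d$ copies of \texttt{ChallengeBT} on disjoint slices of $S$, each with privacy parameters $(\eps/\log(1/\delta_1),\delta_1/d)$. By the composition guarantee of \texttt{RSC} with $\tau=2d$ and $\hat{\delta}=\delta_1$ (combined with the privacy of \texttt{ChallengeBT}), the full transcript of phase~1 is $(\eps,\delta_1)$-DP w.r.t.\ a one-record change in $S$. Every subsequent phase depends on $S$ only through phase~1's transcript: the accumulated $D$ is a deterministic function of the adversary's queries and the algorithm's outputs, and each later phase is a fresh \texttt{RSC}+\texttt{ChallengeBT} invocation on a dataset derived from $D$. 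Hence the full adversary view is $(\eps,\delta_1)$-indistinguishable by post-processing.

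For a challenge query $x_{r^*}$ with $r^*$ in phase $p$, the two worlds differ in whether the algorithm sees $x_{r^*}$ or $\bot$ at round $r^*$, and the prediction for that round is withheld from $\BBB$. The challenge can influence the remaining adversary view only through (a)~the internal state of the (at most $2d$) \texttt{ChallengeBT} instances consulted in the inner loop at round $r^*$, and (b)~the possible insertion of $x_{r^*}$ into one of the accumulated datasets $D$, $D_j^{\mathrm{left}}$, or $D_j^{\mathrm{right}}$ that seed phase $p{+}1$. For~(a), the key observation is that inside \texttt{ChallengeBT} the modified non-halting \texttt{BetweenThresholds} computes each query's output from the fixed dataset plus fresh Laplace noise, so the outputs for rounds $r>r^*$ are distributed identically in the two worlds once their own query inputs are fixed; the only persistent difference is the single extra bit pushed into \texttt{Stopper}'s input stream, and \texttt{Stopper} is DP w.r.t.\ its input bits by Theorem~\ref{thm:stopper}. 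Composing across the $2d$ affected instances with the same per-instance parameters $(\eps/\log(1/\delta_p),\delta_p/d)$ that \texttt{RSC} used at initialization yields $(\eps,\delta_p)$-indistinguishability for the remainder of phase $p$. For~(b), the seed datasets for phase $p{+}1$ are functions of phase $p$'s transcript together with the adversary's queries (which already lie in the adversary's view), so they are $(\eps,\delta_p)$-indistinguishable by post-processing, and every later phase is further post-processing.

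The main obstacle I expect is part~(a): formally lifting the dataset-privacy of \texttt{ChallengeBT} to a ``query-privacy'' guarantee for the portion of its state that outlives round $r^*$. The essential point is that, in the non-halting variant employed here, every piece of persistent state that can differ between the two worlds is absorbed into a one-bit change in \texttt{Stopper}'s input stream; \texttt{Stopper}'s DP then finishes the job. A secondary bookkeeping obstacle is keeping the total privacy cost at $\eps$ rather than $O(d\eps)$ when composing across the $2d$ instances, which motivates both the per-instance privacy parameter $\eps/\log(1/\delta_p)$ and the appeal to the \texttt{RSC}-style composition that was already used at initialization.
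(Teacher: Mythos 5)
Your high-level plan matches the paper's: assign $\boldsymbol{\delta}(i)=\delta_{p(i)}$, verify $\sum_i \boldsymbol{\delta}(i)\leq\delta^*$, handle the $c_0=1$ case via the \texttt{RSC} guarantee, and in the query-challenge case isolate the persistent state that distinguishes the $x_{r^*}$-world from the $\bot$-world (\texttt{ChallengeBT} internals plus the accumulated datasets). The paper formalizes this via a simulator/helper decomposition \`a la \cite{Cohen0NSS23}: a simulator runs the whole interaction itself and only invokes a helper, who knows $b$, for the (at most two) steps where the two worlds actually diverge; each helper call is $(\eps,\delta_p)$-DP in $b$.

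The concrete gap is in your treatment of item~(a). You propose to handle the $2d$ \texttt{ChallengeBT} instances by ``composing across the $2d$ affected instances with the same per-instance parameters $(\eps/\log(1/\delta_p),\delta_p/d)$ that \texttt{RSC} used at initialization.'' This does not work: the \texttt{RSC} composition theorem applies to adaptively-chosen \emph{disjoint slices} of a dataset, where a one-record change affects at most one slice and hence one sub-protocol. A challenge \emph{query} is not a slice element; it is fed to several instances in a single round, so \texttt{RSC}'s theorem gives you nothing here. And ordinary (basic or advanced) composition across $2d$ mechanisms each with budget $\eps/\log(1/\delta_p)$ does not yield $(\eps,\delta_p)$, since $\log(1/\delta_p)=\tilde\Theta(\log d + p + \log(1/(\delta^*\gamma\alpha\eps^2\beta)))$ is far smaller than $2d$. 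What actually saves the argument, and what the paper makes explicit, is that \emph{at most one} instance's \texttt{Stopper} is meaningfully affected in the challenge round: the GOTO structure in Step~6(f) ensures at most one BT query returns ``medium,'' and a ``low''/``high'' answer only pushes a $0$ into \texttt{Stopper}'s stream, which does not change $\sum_{x\in D}x$ and hence has no effect on \texttt{Stopper}'s future behavior. So only a single application of \texttt{Stopper}'s $(\eps,\delta_p)$-DP guarantee is needed, not a $2d$-wise composition.

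There is also a smaller imprecision in item~(b): the seed dataset $D$ for phase $p{+}1$ is not a post-processing of the (visible) phase-$p$ transcript, because whether $(x_{r^*},\hat y_{r^*})$ was inserted into $D$ depends on $b$ and on the hidden answer at round $r^*$. The right way to charge this (as the paper does) is as a \emph{fresh DP computation} on two neighboring datasets $D_0$ and $D_1=D_0\cup\{(x_{r^*},\hat y_{r^*})\}$, performed by the helper when \texttt{RSC} is re-run in Step~5(j)iii. This also clarifies why it is safe: in the ``medium'' sub-case $x_{r^*}$ goes into $D_j^{\rm left/right}$ but \emph{not} into $D$, and in the ``high/all-low'' sub-case it goes into $D$ but \emph{not} into any $D_j^{\rm left/right}$, so only one dataset among $D,D_j^{\rm left},D_j^{\rm right}$ ever carries the challenge point.
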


\begin{proof}
Using the notations of \texttt{RectanglesPEP}, we define the function $\boldsymbol{\delta}:\N\rightarrow[0,1]$ where $\boldsymbol{\delta}(i)=\delta_{p(i)}$ and where $p(i)$ denote the phase to which $i$ belongs. When the time $i$ or the phase $p$ is clear from the context, we write $\delta_p$ instead of $\delta_{p(i)}=\boldsymbol{\delta}(i)$ to simplify the notations. %
Fix $T\in\N$, fix an adversary $\BBB$ for interacting with \texttt{RectanglesPERP} as in Figure~\ref{fig:AdversarialExperiment}, and consider the random variables $\mbox{View}_{\BBB,T}^0$ and $\mbox{View}_{\BBB,T}^1$. %
Now fix an index $i\in[T]$ and let $p=p(i)$. We need to show that for every  event $F$ it holds that
$$
\Pr[\mbox{View}_{\BBB,T}^0 \in F \text{ and } c_i=1] 
\approx_{(\eps,\delta_p)}
\Pr[\mbox{View}_{\BBB,T}^1 \in F \text{ and } c_i=1].
$$
To simplify notations, for $b\in\{0,1\}$ let us define the random variable
$$
\overline{\mbox{View}}_{\BBB,T}^b = 
\begin{cases}
\mbox{View}_{\BBB,T}^b &\text{if } c_i=1\\
\bot &\text{else}
\end{cases}
$$
This is well-defined as $c_i$ is included in the view of the adversary. Using this notation, our goal is to show that 
$$
\overline{\mbox{View}}_{\BBB,T}^0 
\approx_{(\eps,\delta_p)}
\overline{\mbox{View}}_{\BBB,T}^1. 
$$
To show this, we leverage the simulation proof-technique of \cite{Cohen0NSS23}. Specifically, 
we will show that $\overline{\mbox{View}}_{\BBB,T}^b$ can be perfectly simulated by post-processing the outcome of an $(\eps,\delta_p)$-DP computation on the bit $b$. To this end, we describe two interactive mechanisms: a simulator $\Sim$ and a helper $\help$. 
The goal of the simulator is to simulate $\BBB$'s view in the game specified in Figure~\ref{fig:AdversarialExperiment}.
This simulator does not know the value of the bit $b$. Still, it executes $\BBB$ and attempts to conduct as much of the interaction as it can without knowing $b$. Only when it is absolutely necessary for the simulation to remain faithful, our simulator will pose queries to $\help$ (who knows the value of the bit $b$), and $\help$ responds with the result of a DP computation on $b$. We will show the following two statements, which imply the theorem by closure to post-processing:
\begin{enumerate}
    \item $\help$ is $(\eps,\delta_p)$-DP w.r.t.\ the bit $b$,
    \item $\Sim$ perfectly simulates $\overline{\mbox{View}}_{\BBB,T}^b$.
\end{enumerate}

The simulator $\Sim$ begins by instantiating the adversary $\BBB$ (who is expecting to interact with \texttt{RectanglesPERP} through the game specified in Figure~\ref{fig:AdversarialExperiment}). We analyze the execution in cases, according to the time in which the adversary sets $c_r=1$. At any case, if $\BBB$ poses its challenge in any round other than $i$ then the simulator outputs $\bot$, as in the definition of $\overline{\mbox{View}}_{\BBB,T}^b$.

\paragraph{Easy case: ${\boldsymbol{i=0}}$ and ${\boldsymbol{c_0=1}}$.} That is, the adversary $\BBB$ specifies two {\em neighboring} datasets $S^0\neq S^1$ in the beginning of the execution. In this case, $\Sim$ asks $\help$ to execute Step~\ref{step:RSC-initial} of \texttt{RectanglesPERP}. That is, to execute \texttt{RSC} on $S^b$ and to provide $\Sim$ with the querying interface to the resulting instantiations of \texttt{ChallengeBT}. This preserves $(\eps,\delta_p)$-DP by the privacy guarantees of \texttt{RSC}. The continuation of the execution can be done without any further access to the bit $b$. That is, the simulator $\Sim$ continues to query the instantiations of \texttt{ChallengeBT} produces by $\help$, and whenever a new instantiation is required it can run it itself since there are no more differences between the execution with $b=0$ or with $b=1$.

\paragraph{More involved case: ${\boldsymbol{i=r\geq1}}$ and ${\boldsymbol{c_r=1}}$.} In this case, the simulator $\Sim$ can completely simulate the execution of \texttt{RectanglesPERP} until the $r$th round. In particular, at the beginning of the $r$th round, the simulator is running (by itself) the existing copies of \texttt{ChallengeBT}, referred to as $\texttt{Left}_j$ and $\texttt{Right}_j$ in the code of the algorithm. During the $r$th round, the simulator obtains from the adversary the query point $x_r$, but as it does not know the bit $b$, it does not know who among $\{x_r,\bot\}$ needs to be fed to algorithm \texttt{RectanglesPERP}. 
Either way, the simulator can execute Steps~\ref{step:stopping} till~\ref{step:defaultY}, as they do not depend on the current query. Next, the simulator runs Step~\ref{step:tests} {\em assuming that the query is $x_r$.} We proceed according to the following two sub-cases:

\begin{enumerate}
    \item[{\bf 1.}]
    {\bf A ``medium'' answer is not encountered during the simulation of Step~\ref{step:tests}.} Recall that a BT-query to algorithm \texttt{ChallengeBT} only changes its inner state if the answer is ``medium''. Hence, in this case, the execution of Step~\ref{step:tests} with $x_r$ did not affect the inner states of any of the executions of \texttt{ChallengeBT}, and the simulator can continue using them. However, the executions with $x_r$ and with $\bot$ are still not synchronized, as $x_r$ would be added to the dataset $D$ in Step~\ref{step:exitInnerLoop} whereas $\bot$ would not. This means that from this moment until a new phase begins (in Step~\ref{Step:newPhase}), there are two possible options for the dataset $D$: either with $x_r$ or without it. Thus, the next time that this dataset needs to be used by algorithm \texttt{RectanglesPERP}, which happens in Step~\ref{step:reRSC}, the simulator could not do it itself. Instead, it would ask $\help$ to conduct this step for it, similarly to the easy case. This preserves $(\eps,\delta_p)$-DP. From that moment on, the simulator does not need to access the $\help$ anymore.

    \item[{\bf 2.}]
    {\bf A ``medium'' answer is encountered, say when querying $\texttt{Left}_j$.} In this case, the executions with $x_r$ and with $\bot$ differ on the following points:
    \begin{itemize}
        \item The executions differ in the inner state of $\texttt{Left}_j$. Specifically, let $m$ denote the number of ``middle'' answers obtained from the current copy of $\texttt{Left}_j$ before time $r$. After time $r$ the internal state of $\texttt{Left}_j$ differs between the two executions in that if $b=0$ then  the copy of \texttt{Stopper} inside $\texttt{Left}_j$ has a dataset with $m$ ones, while if $b=1$ then it has $m+1$ ones. The execution of \texttt{BetweenThresholds} inside $\texttt{Left}_j$ is not affected. Thus, to simulate this, the simulator $\Sim$ asks $\help$ to instantiate a copy of algorithm \texttt{Stopper} on a dataset containing $m+b$ ones. This satisfies $(\eps,\delta_p)$-DP and allows $\Sim$ to continue the simulation of $\texttt{Left}_j$ exactly.

        \item The executions also differ in that $x_r$ would be added to the dataset $D_j^{\rm left}$ during Step~\ref{step:tests} while $\bot$ would not. In other words, from this moment until $D_j^{\rm left}$ is used (in Step~\ref{Step:afterStopping}) or erased (in Step~\ref{step:eraseD}), there are to options for this dataset: either with $x_r$ or without it. Hence, if $\Sim$ need to execute \texttt{ChallengeBT} on this dataset (in Step~\ref{Step:afterStopping}) then instead of doing it itself it ask $\help$ to do it. This also satisfies $(\eps,\delta_p)$-DP. From that moment on, the simulator does not need to access the $\help$ anymore.
    \end{itemize}
 \end{enumerate}

Overall, we showed that $\Sim$ can perfectly simulate the view of the adversary $\BBB$ while asking $\help$ to execute at most two protocols, each satisfying $(\eps,\delta_p)$-DP. 
This shows that \texttt{RectanglesPERP} is $(\eps,\boldsymbol{\delta})$-private (follows from parallel composition theorems). The fact that $\sum_i\boldsymbol{\delta}(i)=\sum_p t_p\cdot\delta_p$ converges to at most $\delta^*$ follows from our choices for $\delta_p$ and $t_p$.

\end{proof}

\subsubsection{Utility analysis of \texttt{RectanglesPERP}}

Fix a target distribution $\DDD$ over $\R^d$ and fix a target rectangle $c$. Recall that $c$ is defined as the intersection of $d$ intervals, one on every axis, and denote these intervals as $\left\{[c_j^{\rm left},c_j^{\rm right}]\right\}_{j\in[d]}$. That is, for every $x\in\R^d$ we have $c(x)=1$ if and only if $\forall j\in[d]$ we have $c_j^{\rm left}\leq  x[j]\leq c_j^{\rm right}$. We write $\rectangle(c)=\{x\in\R^d : c(x)=1\}\subseteq\R^d$ to denote the positive region of $c$. We assume for simplicity that $\Pr_{x\sim\DDD}[c(x)=1]>\alpha$ (note that if this is not the case then the all-zero hypothesis is a good solution).

\begin{algorithm}[H]
\caption{\bf Defining sub-regions of $\rectangle(c)$}\label{alg:subregions}

\begin{enumerate}[topsep=-1pt,rightmargin=5pt,itemsep=-1pt]%

\item Denote $R=\rectangle(c)$. For $p\in\N$ we denote $\alpha_p=\alpha/2^p$ and $\beta_p=\beta/2^p$, where $\alpha$ and $\beta$ are the utility parameters.

\item For $p=1,2,3,\dots$

\begin{enumerate}
    \item For axis $j=1,2,3,\dots,d$:

\begin{itemize}
    \item Define $\stripe_{p,j}^{\rm left}$ to be the rectangular strip along the inside left side of $R$ which encloses exactly weight $\alpha_p/d$ under $\DDD$. 

    \item Set $R\leftarrow R\setminus\stripe_{p,j}^{\rm left}$

    \item Define $\stripe_{p,j}^{\rm right}$ to be the rectangular strip along the inside right side of $R$ which encloses exactly weight $\alpha_p/d$ under $\DDD$. 

    \item Set $R\leftarrow R\setminus\stripe_{p,j}^{\rm right}$
\end{itemize}

\end{enumerate}

\end{enumerate}
\end{algorithm}

\begin{definition}
For $p\in\N$ and $j\in[d]$ we define $\stripe_{p,j}^{\rm left}$ and $\stripe_{p,j}^{\rm right}$ according to Algorithm~\ref{alg:subregions}.
\end{definition}

\begin{remark}
In Algorithm~\ref{alg:subregions} we assume that the stripes we define has weight {\em exactly} $\alpha_p/d$. This might not be possible (e.g., if  $\DDD$ has atoms), but this technicality can be avoided using standard techniques. For example, by replacing every sample $x\in\R$ with a pair $(x,z)$ where $z$ is sampled uniformly from $[0,1]$. We ignore this issue for simplicity.
\end{remark}

\begin{definition}
Let $R\subseteq\R^d$ and let $j\in[d]$.
For a point $x\in\R$ we write $x\in_j R$ if the projection of $R$ on the $j$th axis contains $x$. 
For a vector $x\in\R^d$ we write $x\in_j R$ if $x[j]\in_j R$. For a set of points (or vectors) $S$ we write $S\subseteq_j R$ if for every $x\in S$ it holds that $x\in_j R$.
\end{definition}

\begin{definition}
Given an (interactive) algorithm $\AAA$ whose input is a dataset, we write $\data(\AAA)$ to denote the dataset on which $\AAA$ was executed.  %
\end{definition}

\begin{definition}\label{def:noiseBounds}
For $p\in\N$, let $E(p)$ denote the following event:
\begin{enumerate}
    \item All the geometric RV's sampled during phase $p$ (via \texttt{RCS}) are at most $\frac{1}{\eps}\log(\frac{1}{\delta_p})\log(\frac{4d}{\beta_p})$.
    \item All the Laplace RV's sampled during phase $p$ (via \texttt{ChallengeBT}) are at most $\Delta_p$  in absolute value, where $\Delta_p=\frac{4\log(1/\delta_p)}{\eps}\sqrt{k_p\log(\frac{2d}{\delta_p})}\log\left(\frac{8d t_p}{\beta_p}\right)$.
\end{enumerate}
\end{definition}

\begin{lemma}\label{lem:noiseBound}
For every $p\in\N$ we have
$\Pr[E(p)]\geq1-\beta_p$.
\end{lemma}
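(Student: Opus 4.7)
The plan is to verify the two clauses of $E(p)$ separately via standard tail bounds for geometric and Laplace random variables, and then combine them by a union bound. Let $\eps'=\eps/\log(1/\delta_p)$ and $\delta'=\delta_p/d$ denote the privacy parameters passed to each copy of \texttt{ChallengeBT} instantiated during phase $p$.

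First I would count the relevant random variables generated in phase $p$. At the start of the phase, algorithm \texttt{RSC} is invoked to instantiate the $2d$ copies $\texttt{Left}_j,\texttt{Right}_j$ for $j\in[d]$, and each iteration of \texttt{RSC}'s loop samples a single geometric noise $\Geom(1-e^{-\eps'})$. The re-executions in Step~\ref{Step:afterStopping} operate directly on the accumulated datasets $D_j^{\rm left},D_j^{\rm right}$ rather than carving new slices out of $S$ (or $D$), so they incur no additional draws from \texttt{RSC}. Hence phase $p$ produces at most $2d$ geometric variables. For the Laplace noises, each stopping query to a copy of \texttt{ChallengeBT} samples one Laplace inside its internal \texttt{Stopper}, and each BT-query samples one Laplace inside its internal \texttt{BetweenThresholds}; with at most one stopping query and one BT-query to each of the $2d$ copies per time step and $t_p$ time steps in the phase, the total number of Laplace variables generated in phase $p$ is at most $N=O(d\cdot t_p)$.

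Next I would apply standard tail bounds. For a geometric variable with parameter $1-e^{-\eps'}$, we have $\Pr[\Geom(1-e^{-\eps'})>t]\leq e^{-\eps' t}$; setting $t=\frac{1}{\eps'}\log(4d/\beta_p)=\frac{\log(1/\delta_p)}{\eps}\log(4d/\beta_p)$ yields failure probability at most $\beta_p/(4d)$ per draw, so a union bound over the $2d$ draws controls the first clause of $E(p)$ except with probability at most $\beta_p/2$. For the Laplace noises, the scale inside \texttt{ChallengeBT} is at most $\frac{4}{\eps'}\sqrt{k_p'\log(4d/\delta_p)}=O\!\left(\frac{\log(1/\delta_p)}{\eps}\sqrt{k_p\log(d/\delta_p)}\right)$, using $k_p'=k_p+O(\tfrac{1}{\eps'}\log(d/\delta_p)\log(d t_p/\delta_p))$ together with the fact that $k_p$ dominates the inflation (which is built into the choice of $m_p$). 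Applying $\Pr[|\Lap(b)|>b\cdot s]=e^{-s}$ with $s=\log(8 d t_p/\beta_p)$ gives a per-variable failure probability at most $\beta_p/(4d t_p)$; a union bound over the $N$ Laplace variables bounds the failure of the second clause by $\beta_p/2$. Summing the two failure events yields $\Pr[E(p)]\geq 1-\beta_p$.

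The only non-routine step is the bookkeeping inside \texttt{ChallengeBT}: the modified internal \texttt{BetweenThresholds} uses the inflated $k_p'$ rather than $k_p$, and the Stopper noise is on a separate scale $O(\frac{\log(1/\delta_p)}{\eps}\log(d/\delta_p))$, so one must check that the logarithmic factor $\log(d t_p/\beta_p)$ built into $\Delta_p$ absorbs both the $k_p\to k_p'$ inflation and the $\log(1/\delta_p)$ conversion between the outer $\eps$ and the inner $\eps'$. This is a matter of comparing the definitions of $\Delta_p$, $k_p$, and $m_p$ with the parameters of \texttt{BetweenThresholds} and \texttt{Stopper}, and is not a genuine obstacle.
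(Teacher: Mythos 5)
Your proposal is correct and follows essentially the same route as the paper's proof: count the $2d$ geometric draws from the initial \texttt{RSC} instantiation and the $O(d\cdot t_p)$ Laplace draws across the $t_p$ time steps and $2d$ copies of \texttt{ChallengeBT}, apply the standard exponential tail bounds, and union-bound each clause of $E(p)$ to probability $\beta_p/2$. Your additional bookkeeping around $k_p$ versus the inflated $k_p'$ and the \texttt{Stopper} noise scale is a detail the paper's proof glosses over, but it is indeed absorbed by the choice of $m_p$ (and hence $k_p$) and by the $\log(dt_p/\beta_p)$ factor in $\Delta_p$, so it is a genuine but benign tightening rather than a divergence.
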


\begin{proof}
In the beginning of phase $p$ we sample $2d$ geometric RV's (via the \texttt{RCS} paradigm) with parameter $\left(1-e^{-\eps/\log(1/\delta_p)}\right)$. By the properties of the geometric distribution, with probability at least $1-\frac{\beta_p}{2}$, all of these samples are at most $\frac{1}{\eps}\log(\frac{1}{\delta_p})\log(\frac{4d}{\beta_p})$.

Throughout phase $p$ there are at most $t_p$ time steps. In every time steps we have (at most) 2 draws from the Laplace distribution in every copy of \texttt{ChallengeBT}, and there are 2d such copies. By the properties of the Laplace distribution, with probability at least $1-\frac{\beta_p}{2}$, all of these samples are at most
$\frac{4\log(1/\delta_p)}{\eps}\sqrt{k_p\log(\frac{2d}{\delta_p})}\log\left(\frac{8d t_p}{\beta_p}\right)$ in absolute value.
\end{proof}

\begin{lemma}\label{lem:induc}
If the size of the initial labeled dataset satisfies 
$n=\Omega\left( \frac{d}{\alpha\cdot\eps^2}\polylog(\frac{d}{\alpha\cdot\beta\cdot\gamma\cdot\eps\cdot\delta^*}) \right)$, then 
for every $p\in\N$, the following holds with probability at least $\left(1-2\sum_{q\in[p]}\beta_q\right)$.
\begin{enumerate}
    \item Event $E(p')$ holds for all $p'\leq p$.
    \item For every $p'\leq p$, at any moment during phase $p'$ and for every $j\in[d]$ it holds that
$$
\data\left(  \texttt{Left}_j \right)\in_j \bigcup_{q\in[p']} \stripe_{q,j}^{\rm left},
\quad\text{and}\quad
\data\left(  \texttt{Right}_j \right)\in_j \bigcup_{q\in[p']} \stripe_{q,j}^{\rm right}.
$$
\item At the end of phase $p$, for every $j\in[d]$ and $s\in\{{\rm left}, {\rm right}\}$ the dataset $D$ contains at least $2m_{p+1}$ points from $\stripe_{p+1,j}^s$ with the label 1, and all positively labeled points in $D$ belong to $\rectangle(C)$. 
\end{enumerate}
\end{lemma}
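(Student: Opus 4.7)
I argue by induction on $p$, carrying all three clauses together. Clause~(1) is immediate from \lemref{noiseBound} and a union bound over $q\leq p$. The substance lies in maintaining two invariants in parallel: first, that every $\texttt{Left}_j$ (respectively $\texttt{Right}_j$) instance active in phase~$p$ holds data contained in $\bigcup_{q\in[p]}\stripe_{q,j}^{\rm left}$ (respectively $\bigcup_{q\in[p]}\stripe_{q,j}^{\rm right}$) along axis~$j$; second, that every positively labeled entry added to~$D$ lies inside $\rectangle(c)$.

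The base case uses Chernoff on the $n$ i.i.d.\ samples of $S$: the assumed bound $n=\tilde\Omega(d/(\alpha\eps^2))$ ensures each $\stripe_{1,j}^{s}$ contains at least $2m_1$ positive examples, and under $E(1)$ the geometric noise added by \texttt{RSC} is at most $m_1$, so the $m_1+\Geom$ extremal points selected in Step~\ref{step:RSC-initial} lie inside that stripe. The inductive step at the start of phase~$p$ is the same argument applied to~$D$ in Step~\ref{step:reRSC}, using clause~(3) of the inductive hypothesis to supply the $\geq 2m_p$ positive points inside $\stripe_{p,j}^s$. The in-phase part of clause~(2) is the main step: whenever a query $x_i$ yields a ``medium'' answer from $\texttt{Left}_j$, and is therefore appended to $D_j^{\rm left}$, event $E(p)$ together with the threshold gap $t_h-t_l=\Delta_p$ and the dataset size $\Theta(m_p)\gg\Delta_p$ force the true count of data coordinates strictly exceeding $x_i[j]$ to be at least~$1$ and strictly less than the dataset size; together with the current invariant on $\texttt{Left}_j$'s dataset this localises $x_i[j]$ inside $\bigcup_{q\in[p]}\stripe_{q,j}^{\rm left}$, which is exactly what keeps every restart in Step~\ref{Step:afterStopping} inside the allowed region.

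For clause~(3), the same noise analysis in reverse shows that a ``low'' answer from $\texttt{Left}_j$ on $x_i$ forces $x_i[j]\geq c_j^{\rm left}$ and a ``low'' from $\texttt{Right}_j$ forces $x_i[j]\leq c_j^{\rm right}$, so any labeled-$1$ entry of $D$ lies inside $\rectangle(c)$. For the counting statement, a point drawn from $\stripe_{p+1,j}^s$ is strictly interior to $\bigcup_{q\in[p]}\bigl(\stripe_{q,j'}^{\rm left}\cup\stripe_{q,j'}^{\rm right}\bigr)$ along every axis~$j'$ by the stripe construction; hence, by clause~(2) and $E(p)$, every test in Step~\ref{step:tests} returns ``low'' and the point is appended to~$D$ with label~$1$. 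A Chernoff bound on the $\gamma t_p$ true samples of phase~$p$, using $\Pr_{\DDD}[x\in\stripe_{p+1,j}^{s}]=\alpha_{p+1}/d$ and the choice $t_p=\Theta(dm_p/(\gamma\alpha_p))$ with a sufficiently large hidden constant, then delivers the required $\geq 2m_{p+1}$ contributions in each $\stripe_{p+1,j}^{s}$.

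\textbf{Main obstacle.} The delicate point is the double-edged role of $\Delta_p$: it must be large enough that a ``medium'' answer unambiguously localises the query inside the existing stripes (needed to preserve the $\texttt{Left}/\texttt{Right}$ invariant across restarts within a single phase) and small enough relative to $m_p$ that a ``low'' answer genuinely certifies containment in $\rectangle(c)$ (needed to keep $D$ free of false positives). The parameters of the algorithm are tuned so that both implications survive a union bound over the $d$ axes, the two sides, the restarts in phase~$p$, and the $t_p$ queries of the phase, yielding the claimed cumulative failure probability $2\sum_{q\in[p]}\beta_q$.
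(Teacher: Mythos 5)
Your proof follows essentially the same inductive argument as the paper: clause~(1) from \lemref{noiseBound}, clause~(2) maintained both at phase boundaries (via the stripe counts supplied by the previous phase's clause~(3) and the \texttt{RSC} noise bound) and across mid-phase restarts (because bounded Laplace noise with $m_p\gg\Delta_p$ forces any ``medium'' query to lie strictly between the extremes of the current $\texttt{Left}_j$/$\texttt{Right}_j$ dataset), and clause~(3) from the one-sided labeling error plus Chernoff on the $\gamma$-fraction of in-distribution queries. One minor note: the sentence claiming a point of $\stripe_{p+1,j}^s$ is ``strictly interior to $\bigcup_{q\in[p]}(\stripe_{q,j'}^{\rm left}\cup\stripe_{q,j'}^{\rm right}\bigr)$'' is phrased backwards — the point lies strictly \emph{beyond} those stripes (inside the residual rectangle $R$ left after peeling them off), not inside their union — but the surrounding argument makes clear you intend the correct relation, so this is a wording slip rather than a gap.
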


\begin{proof}
The proof of is by induction on $p$.\footnote{The base case of this induction (for $p=1$) follows from similar arguments to those given for the induction step, and is omitted for simplicity.} To this end, we assume that Items~1-3 hold for $p-1$, and show that in this case they also hold for $p$, except with probability at most $2\beta_p$. 
First, by Lemma~\ref{lem:noiseBound}, Item~1 holds with probability at least $1-\beta_p$. We proceed with the analysis assuming that this is the case.

By assumption, at the end of phase $p-1$, the dataset $D$ contains at least $2m_{p}$ points from every $\stripe_{p,j}^s$. As we assume that the noises throughout phase $p$ are bounded (i.e., that Item~1 of the lemma holds), and asserting that $m_p\geq\frac{1}{\eps}\log(\frac{1}{\delta_p})\log(\frac{4d}{\beta_p})$, we get that Item~2 of the lemma holds in the beginning of phase $p$ (in Step~\ref{step:reRSC}, when the copies of \texttt{ChallengeBT} are executed by the RSC paradigm). Throughout the phase, if we ever re-execute any of the copies of \texttt{ChallengeBT} in Step~\ref{Step:afterStopping}, say the copy $\texttt{Left}_j$, then this happens because $\texttt{Left}_j$ has produced $\approx k_p$ ``medium'' answers. By our assumption that the noise is bounded, 
and by asserting that $k_p\geq2\Delta_p$,
we get that there were at least $k_p/2$ ``medium'' answers. (Here $\Delta_p$ bounds the Laplace noise throughout the phase, see Definition~\ref{def:noiseBounds}.) Note that every query $x$ that generated such a ``medium'' answer is added to $D_j^{\rm left}$. Again by our assumption that the noise is bounded, and by asserting that $m_p\geq 4\Delta_p$, every such query $x$ satisfies $x\in_j \data(\texttt{Left}_j)$. Hence, if we re-execute \texttt{ChallengeBT} in Step~\ref{Step:afterStopping}, then the projection of its new dataset on the $j$th axis is contained within the projection of its previous dataset. Furthermore, by asserting that $k_p\geq2m_p$ we get that the new dataset contains at least $m_p$ points. This implies (by induction) that Item~2 of the lemma continues to hold throughout all of the phase.

We now proceed with the analysis of Item~3. We have already established that, conditioned on Items~1,2,3 holding for $p-1$, then with probability at least $1-\beta_p$ we have that Items~1,2 hold also for $p$. Furthermore, this probability is over the sampling of the noises throughout phase $p$. The query points which are sampled throughout phase $p$ from the target distribution $\DDD$ are independent of these events. Recall that there are $t_p$ time steps throughout phase $p$, and that roughly $\gamma$ portion of them are sampled from $\DDD$. Recall that each $\stripe_{p+1,j}^s$ has weight $\alpha_{p+1}/d$ under $\DDD$. Thus, by the Chernoff bound (and a union bound over $j,s$), if $t_p\geq\frac{8d}{\gamma \alpha_p}\ln\left(\frac{2d}{\beta_p}\right)$, then with probability at least $1-\beta_p$, throughout phase $p$ we receive at least $\frac{\gamma\cdot\alpha_{p}\cdot t_{p}}{2d}$ points from every $\stripe_{p+1,j}^s$. By the assumption that the noises are bounded (Item~1), each of these points is added to the dataset $D$ with the label 1. Thus, provided that $t_p\geq\frac{4d}{\gamma\cdot\alpha_p}\cdot m_{p+1}$, then the dataset $D$ at the end of phase $p$ contains at least $2\cdot m_{p+1}$ points from every $\stripe_{p+1,j}^s$. Note that this dataset might contain additional points:
\begin{itemize}
    \item $D$ might contain additional points with label 0, but these do not affect the continuation of the execution. 
    \item $D$ might contain additional points with label 1. However, again by the assumption that the noises are bounded, we have that our labeling error is {\em one sided}, meaning that if a point is included in $D$ with the label 1 then its true label {\em must} also be 1.
\end{itemize}
To summarize, the dataset $D$ contains at least $2\cdot m_{p+1}$ (distinct) points from every $\stripe_{p+1,j}^s$ with the label 1, and all positively labeled points it contains belong to $\rectangle(C)$. This completes the proof.
\end{proof}

\begin{theorem}
For every $\alpha,\beta,\gamma$ 
Algorithm \texttt{RectanglesPERP} is an $(\alpha,\beta,\gamma,n)$-everlasting robust predictor where 
$n=\Theta\left( \frac{d}{\alpha\cdot\eps^2}\polylog(\frac{d}{\alpha\cdot\beta\cdot\gamma\cdot\eps\cdot\delta^*}) \right).$
\end{theorem}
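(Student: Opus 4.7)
The plan is to reduce the utility guarantee to the structural facts of \lemref{induc}. Taking $p\to\infty$ in that lemma and using $\sum_{q\ge 1}\beta_q\le\beta$, we obtain that, with probability at least $1-2\beta$ over the whole execution, a ``good event'' $E^*$ holds: for every phase $p$, both the noise event $E(p)$ of \defref{noiseBounds} and the axis-wise containments $\data(\texttt{Left}_j)\subseteq_j\bigcup_{q\le p}\stripe_{q,j}^{\rm left}$ and $\data(\texttt{Right}_j)\subseteq_j\bigcup_{q\le p}\stripe_{q,j}^{\rm right}$ hold at every moment of phase $p$ and for every $j\in[d]$. I condition on $E^*$ for the rest of the argument.

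Fix any round $r$ and let $p=p(r)$. The hypothesis $h_r$ is the randomized procedure that, on a fresh input $x$, draws fresh Laplace noise and runs the $2d$ tests of Step~\ref{step:tests}. I bound $\error_\DDD(c,h_r)\le\alpha$ via two complementary points. \emph{No false positives:} if $c(x)=0$ then some coordinate $x[j]$ lies outside $[c_j^{\rm left},c_j^{\rm right}]$; WLOG $x[j]<c_j^{\rm left}$, in which case all $|\data(\texttt{Left}_j)|\ge m_p$ points of that dataset have $j$-th coordinate at least $c_j^{\rm left}>x[j]$. The true count fed to the corresponding copy of \texttt{ChallengeBT} is therefore at least $m_p$, and since the parameters force $m_p\ge 3\Delta_p$, any noise of magnitude $\le\Delta_p$ still drives the noisy count above $t_h=2\Delta_p$, so the test returns ``high'' and $h_r(x)=0$. \emph{Good interior:} set $L_j=\max\{x[j]:x\in\data(\texttt{Left}_j)\}$ and $R_j=\min\{x[j]:x\in\data(\texttt{Right}_j)\}$. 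On $E^*$ we have $c_j^{\rm left}\le L_j$ and $R_j\le c_j^{\rm right}$, so $\prod_j[L_j,R_j]\subseteq\rectangle(c)$; moreover for any $x$ in this box the true count of every test is $0$, which under $|{\rm noise}|\le\Delta_p$ forces every test to return ``low'' and hence $h_r(x)=1=c(x)$.

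Consequently, a disagreement $h_r(x)\neq c(x)$ requires either $x\in\rectangle(c)\setminus\prod_j[L_j,R_j]$, or at least one of the $O(d)$ fresh Laplace samples drawn during the evaluation to exceed $\Delta_p$ in absolute value. Because the stripe construction of Algorithm~\ref{alg:subregions} is iterative---peeling off slabs of $\DDD$-mass exactly $\alpha_q/d$ axis by axis---the projection of $\bigcup_{q\le p}\stripe_{q,j}^{\rm left}$ on axis $j$ equals $[c_j^{\rm left},\ell_j^{(p)}]$, where $\ell_j^{(p)}$ is the left boundary of the ``core'' rectangle after phase $p$. Hence $L_j\le\ell_j^{(p)}$, the interior box contains the entire core $\rectangle(c)\setminus\bigcup_{q,j,s}\stripe_{q,j}^s$, and the peel has $\DDD$-mass bounded by the sum of stripe masses $\sum_q 2d\cdot(\alpha_q/d)\le 2\alpha$. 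The noise-failure term is controlled by the Laplace tail bound together with the algorithm's choice of $\Delta_p$ (comfortably larger than the relevant scale times $\log(d/\alpha)$), giving a per-evaluation failure probability well below $\alpha$. Absorbing constants in the definitions of $\alpha_p$ and $\Delta_p$ yields $\error_\DDD(c,h_r)\le\alpha$ uniformly in $r$; combined with the $2\beta$ failure probability of $E^*$, this delivers the everlasting-prediction guarantee.

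The sample complexity bound follows by checking the minimum $n$ required by \lemref{induc}: a Chernoff bound implies we need the initial sample to contain at least $m_1$ positive examples in each of the $2d$ first-phase stripes $\stripe_{1,j}^s$, each of $\DDD$-mass $\alpha_1/d=\Theta(\alpha/d)$, which happens once $n=\tilde\Theta(d\cdot m_1/\alpha)=\tilde\Theta(d/(\alpha\varepsilon^2)\cdot\polylog(d/(\alpha\beta\gamma\varepsilon\delta^*)))$, matching the claim. The two most delicate points I anticipate in the full write-up are (a) nailing down the above projection identity, which is what prevents the naive per-axis union bound from multiplying the error by a factor of $d$, and (b) formally decoupling the fresh per-evaluation Laplace noise from the noise tracked by $E(p)$ so that the Laplace tail bound applies cleanly to a single hypothesis evaluation.
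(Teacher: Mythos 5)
Your proof is correct and takes essentially the same route as the paper: invoke \lemref{induc}, and argue that conditioned on its Items~1--2 the current hypothesis misclassifies only points inside the union of stripes, whose total $\DDD$-mass is $O(\alpha)$. The paper compresses your false-positive/good-interior dichotomy and the axis-wise projection identity into a single ``observe that,'' and it does not explicitly handle your delicate point~(b) (the per-evaluation Laplace noise drawn by $h_r$ on a fresh query is not covered by the conditioning on the events $E(p)$), so your extra care there is filling in a detail the paper glosses over.
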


\begin{proof}
At time $i$ of the execution, consider the hypothesis defined by Steps~\ref{step:obtainQuery}--\ref{step:output} of \texttt{RectanglesPERP}. (This hypothesis takes an unlabeled point $x_i$ and returns a label $\hat{y}_i$.) We now claim that whenever Items~1 and~2 defined by Lemma~\ref{lem:induc} hold (which happens with probability at least $1-\beta$), then all of these hypotheses has error at most $\alpha$ w.r.t.\ the target distribution $\DDD$ and the target concept $c$. To see this, observe that whenever Items~1 and~2 hold, then algorithm \texttt{RectanglesPERP} never errs on point outside of
$$\bigcup_{\substack{p\in\N, j\in[d]\\s\in\{{\rm left}, {\rm right}\}}} \stripe_{p,j}^s,$$
which has weight at most $\alpha$ under $\DDD$.
\end{proof}

\begin{remark}
For constant $d$, the analysis outlined above holds even if the target concept $c$ is chosen adaptively based on the initial sample $S$ (and then $S$ is relabeled according to the chosen concept $c$). The only modification is that in the base case of Lemma~\ref{lem:induc}, we cannot use the Chernoff bound to show that each $\stripe_{1,j}^s$ contains at least $2\cdot m_1$ points. Instead we would rely on standard uniform convergence arguments for VC classes, showing that w.h.p.\ over sampling the unlabeled points in $S$, the probability mass of {\em every} possible rectangle is close to its empirical weight in $S$ up to a difference of $O(\alpha)$. This argument does not extend directly to super-constant values for $d$, because in the analysis we argued about ``stripes'' with weight $\approx \alpha/d$, rather than $\alpha$, but this does not change much when $d=O(1)$.
\end{remark}

\subsection{Decision-stumps}

A decision stump could be thought of as a halfspace which is aligned with one of the principal axes. Thus, decision-stumps in $d$ dimensions could be privately predicted using our construction for rectangles from Section~\ref{sec:rectangles}. But this would be extremely wasteful as the VC dimension of decision stump is known to be $\Theta(\log d)$ instead of $\Theta(d)$. We briefly describe a simple PERP for this class which is computationally-efficient and exhibits sample complexity linear in $\log(d)$. Formally,

\begin{definition}[Decision-Stumps]
Let $d\in\N$ denote the dimension. For every $j\in[d]$, $\sigma\in\{\pm1\}$, and $t\in\R$, define the concept ${\rm desision}_{j,\sigma,t}:\R^d\rightarrow\{0,1\}$ as follows. For $x=(x_1,\dots,x_d)\in\R^d$ we have ${\rm desision}_{j,\sigma,t}(x)={\rm sign}(\sigma\cdot (x_j-t))$. Define the class of all decision-stumps over $\R^d$ as ${\rm DECISION}_d=\left\{ {\rm desision}_{j,s,t} : j\in[d], \sigma\in\{\pm1\},t\in\R  \right\}$.
\end{definition}

\begin{algorithm}
\caption{\bf \texttt{DecisionPERP}}
{\bf Initial input:} Labeled dataset $S\in (\R^d\times\{0,1\})^n$ and parameters $\eps,\delta,\alpha,\beta,\gamma$.

\begin{enumerate}[topsep=-1pt,rightmargin=5pt,itemsep=-1pt]%
\item\label{step:DDexp} Use the exponential mechanism with privacy parameter $\frac{\eps}{4}$ to select a pair $(j,\sigma)\in[d]\times\{\pm1\}$ for which there is a decision stump with low empirical error on $S$.

\item Let $p$ denote the number of positively labeled points in $S$, and let $\hat{p}\leftarrow p+\Lap(\frac{4}{\eps})$.

\item Let $D$ be a dataset containing the projection of the points in $S$ onto the $j$th axis. Sort $D$ in an ascending order if $\sigma=1$, and in descending order otherwise. Relabel the first $\hat{p}$ examples in $D$ as 1 and the other examples as 0.

\item Execute algorithm \texttt{RectanglesPERP} on $D$ (as a multiset) for predicting thresholds in 1 dimension. Use parameters $\frac{\eps}{4},\frac{\delta}{2},\frac{\alpha}{2},\frac{\beta}{2},\gamma$. During the execution, pass only the $j$ coordinate of the given queries to \texttt{RectanglesPERP}.

\end{enumerate}
\end{algorithm}

\begin{theorem}
Algorithm \texttt{DecisionPERP} is an $(\alpha,\beta,\gamma,\eps,\delta,n)$-PERP for the class of $d$-dimensional decision-stumps, where
$n=\Theta\left( \frac{1}{\alpha\eps}\log(d) + \frac{1}{\alpha\cdot\eps^2}\polylog(\frac{1}{\alpha\cdot\beta\cdot\gamma\cdot\delta}) \right)$.
\end{theorem}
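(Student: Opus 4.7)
My plan is to decompose the analysis into three parts: privacy via composition, utility via a reduction to one-dimensional threshold prediction, and a sample-complexity accounting that aggregates the two bottlenecks.

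For privacy, I view the algorithm as a composition of three mechanisms operating on $S$. The exponential mechanism in Step~\ref{step:DDexp} has a quality score of sensitivity $1$ and hence is $(\eps/4)$-DP. The Laplace release of $\hat{p}$ in Step~$2$ is $(\eps/4)$-DP by the standard analysis. Step~$4$ invokes \texttt{RectanglesPERP} on the transformed dataset $D$ with privacy parameters $\eps/4$ and $\delta/2$. The only nontrivial point is translating Step~$4$'s privacy w.r.t.\ $D$ into a privacy guarantee w.r.t.\ $S$: conditioned on any fixed $(j,\sigma,\hat{p})$, changing one element of $S$ modifies $D$ by at most a constant number of add/remove operations (the added or removed projection, plus at most one ``label flip'' caused by a unit shift of the sort rank crossing the threshold $\hat{p}$). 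Applying group privacy with this constant factor and then composing across the three steps yields the claimed $(\eps,\delta)$-DP overall; any small slack in constants can be absorbed into the privacy parameters of the sub-mechanisms.

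For utility, I would establish the following three facts, each with probability at least $1-\beta/3$. First, when $n \gtrsim \frac{\log d}{\alpha\eps}$, the utility bound for the exponential mechanism, together with uniform convergence over the class of decision stumps (which has VC dimension $\Theta(\log d)$), implies that the chosen direction $(j,\sigma)$ admits a decision stump $c_{j,\sigma}$ whose generalization error against the true target $c$ is $O(\alpha)$. Second, the Laplace tail bound guarantees that $|\hat{p}-p|$ is at most $O(\frac{1}{\eps}\log\frac{1}{\beta})$, so after sorting the projections and relabeling the first $\hat{p}$ entries as $1$, the synthetic labels in $D$ disagree with those produced by $c_{j,\sigma}$ on only an $O(\alpha)$-fraction of the points. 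Third, Step~$4$ is precisely \texttt{RectanglesPERP} run in the degenerate case $d=1$, where rectangles reduce to thresholds, and the robustness property of Definition~\ref{def:everlastingRobustPrediction} is preserved under projection of queries onto axis $j$ (genuine samples project to the marginal of $\DDD$ on axis $j$, and adversarial queries remain adversarial). The accuracy guarantee of \texttt{RectanglesPERP} then implies that every hypothesis it selects has 1D-error at most $\alpha/2$ against the projected threshold target; pulling this back through the $O(\alpha)$ discrepancy between $c$ and $c_{j,\sigma}$ yields overall error at most $\alpha$ on $\DDD$ against $c$.

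The sample-complexity bound then follows by summing the two bottlenecks: $\frac{\log d}{\alpha\eps}$ from the exponential mechanism and $\frac{1}{\alpha\eps^2}\polylog(\frac{1}{\alpha\beta\gamma\delta})$ inherited from \texttt{RectanglesPERP} in dimension one. The part I expect to be the main obstacle is making the utility argument fully rigorous in the presence of the $O(\alpha)$ label noise introduced when relabeling by $\hat{p}$: the inductive invariants behind \texttt{RectanglesPERP} (specifically Lemma~\ref{lem:induc}) demand enough \emph{correctly} labeled positive examples inside each ``stripe'', and I will need to bump the relevant constants so that the mislabeled fraction is small enough not to disturb this invariant. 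This is essentially a quantitative slack adjustment, but it is where the three sub-analyses (exponential-mechanism accuracy, Laplace concentration, and \texttt{RectanglesPERP}'s inductive argument) must be balanced simultaneously.
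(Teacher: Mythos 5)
Your privacy argument is in line with the paper's: both observe that, conditioned on $(j,\sigma,\hat{p})$, a single change to $S$ moves at most a constant number of elements of $D$ (the paper says "at most two"), so group privacy plus composition does the job.

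The utility argument, however, contains a genuine gap, and it is precisely the one you flag yourself as "the main obstacle." You treat the relabeled dataset $D$ as containing $O(\alpha)$ label noise relative to some reference concept ($c_{j,\sigma}$), and then worry that this noise could break the inductive invariant of Lemma~\ref{lem:induc} inside \texttt{RectanglesPERP}, requiring you to "bump constants" and "balance three sub-analyses simultaneously." This framing is wrong and the worry is unnecessary. The key observation you are missing is that $D$ is \emph{perfectly realizable}: after sorting the projections by the $j$th coordinate and labeling the first $\hat{p}$ entries as $1$, the resulting labels are exactly those of a threshold concept $(j,\sigma,\hat{t})$ for the $\hat{t}$ sitting between the $\hat{p}$-th and $(\hat{p}+1)$-th sorted values. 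So \texttt{RectanglesPERP} receives a \emph{noiseless} training set, just for a surrogate target $\hat{t}$ rather than the true $t^*$. Its utility guarantee then applies verbatim, with no modification to Lemma~\ref{lem:induc} or to any constants. What remains is to bound the distance between the surrogate $(j,\sigma,\hat{t})$ and the true target $(j^*,\sigma^*,t^*)$, which the paper does by keeping the argument empirical: the exponential mechanism guarantees some $(j,\sigma,t)$ misclassifies $O(\frac{1}{\eps}\log\frac{d}{\beta})$ points of $S$; the Laplace tail gives $|\hat{p}-p|=O(\frac{1}{\eps}\log\frac{d}{\beta})$, so $(j,\sigma,\hat{t})$ and $(j,\sigma,t)$ disagree on $O(\frac{1}{\eps}\log\frac{d}{\beta})$ points of $S$; a triangle inequality and a single generalization step at the end give $\error_\DDD((j,\sigma,\hat{t}),c^*)\leq\alpha/2$; and another triangle inequality combines this with the $\alpha/2$ guarantee of \texttt{RectanglesPERP} against $\hat{t}$. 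Your phrase "the projected threshold target" conceals this: the projection of $c^*$ onto axis $j$ is not a threshold unless $j=j^*$, so you must name $\hat{t}$ explicitly as the object \texttt{RectanglesPERP} is actually learning. Once you make that identification, the whole difficulty you anticipate evaporates, and the proof factors cleanly into a blackbox invocation of \texttt{RectanglesPERP} plus a one-shot empirical-plus-generalization bound on the surrogate.
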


\begin{proof}
The privacy analysis is straightforward; follows from composition and from the fact that once $j,\sigma,\hat{p}$ are set, then changing one element of $S$ affects at most two elements in $D$. As for the utility analysis, let $\DDD$ denote the target distribution and let $(j^*,\sigma^*,t^*)$ denote the target concept. By the properties of the exponential mechanism, with probability at least $1-\frac{\beta}{6}$, the pair $(j,\sigma)$ selected in Step~\ref{step:DDexp} is such that there is a number $t$ with which $(j,\sigma,t)$ misclassifies at most $O(\frac{1}{\eps}\log \frac{d}{\beta})$ points in $S$. 
Now let $(j,\sigma,\hat{t})$ be a concept that agrees with the relabeled dataset $D$. By the properties of the Laplace distribution, with probability at least $1-\frac{\beta}{6}$, these two concepts $(j,\sigma,t),(j,\sigma,\hat{t})$ disagree on at most $O(\frac{1}{\eps}\log\frac{d}{\beta})$ points in $S$. By the triangle inequality, we hence get that $(j,\sigma,\hat{t})$ disagrees with the target concept $(j^*,\sigma^*,t^*)$ on at most $O(\frac{1}{\eps}\log\frac{d}{\beta})$ points in $S$.
Asserting that $|S|=\Omega\left(\frac{1}{\alpha\eps}\log\frac{d}{\beta}\right)$, by standard generalization arguments, with probability at least $1-\frac{\beta}{6}$, the concept $(j,\sigma,\hat{t})$ has error at most $\frac{\alpha}{2}$ w.r.t.\ the target distribution and the target concept. Finally, by the properties of \texttt{RectanglesPERP}, with probability at least $(1-\frac{\beta}{2})$ it guarantees $\frac{\alpha}{2}$-accuracy w.r.t.\ $(j,\sigma,\hat{t})$, even if $(1-\gamma)$ fraction of the queries are adversarial. The theorem now follows by the triangle inequality.
\end{proof}

\section*{Acknowledgments}
The author would like to thank Amos Beimel for helpful
discussions about Definitions~\ref{def:decayingDelta} and~\ref{def:decayingDeltaVer2}.

\bibliographystyle{plainnat}

\end{document}